\documentclass[10pt,twocolumn,letterpaper]{article}

\usepackage{wacv}              

\usepackage{booktabs}
\usepackage{siunitx}
\usepackage{multirow}
\usepackage{graphicx}
\usepackage{float}
\usepackage{caption}
\usepackage{stfloats}
\usepackage[utf8]{inputenc}
\usepackage{booktabs}  
\usepackage{array}     
\usepackage{adjustbox}
\usepackage{amsthm}
\usepackage{algpseudocode}
\usepackage{algorithm}
\usepackage{tabularx}
\usepackage{pdflscape} 
\usepackage{booktabs}
\usepackage[table]{xcolor} 
\usepackage{framed}
\definecolor{shadecolor}{gray}{0.92}

\definecolor{wacvblue}{rgb}{0.21,0.49,0.74}
\usepackage[pagebackref,breaklinks,colorlinks,allcolors=wacvblue]{hyperref}
\newtheorem*{problem}{Problem Statement}
\newtheorem{theorem}{Theorem}
\newtheorem{lemma}{Lemma}
\def\wacvPaperID{1703} 
\def\confName{WACV}
\def\confYear{2026}

\title{Human knowledge integrated multi-modal learning for single source domain generalization}

\author{
Ayan Banerjee \qquad Kuntal Thakur \qquad Sandeep Gupta\\
Impact Lab, Arizona State University\\
{\tt\small \{ayan.banerjee, kthakur9, sandeep.gupta\}@asu.edu}
}

\begin{document}
\maketitle
\begin{abstract}
Generalizing image classification across domains remains challenging in critical tasks such as fundus image–based diabetic retinopathy (DR) grading and resting-state fMRI seizure onset zone (SOZ) detection. When domains differ in unknown causal factors, achieving cross-domain generalization is difficult, and there is no established methodology to objectively assess such differences without direct metadata or protocol-level information from data collectors, which is typically inaccessible. We first introduce domain conformal bounds (DCB), a theoretical framework to evaluate whether domains diverge in unknown causal factors. Building on this, we propose GenEval, a multimodal Vision Language Models (VLM) approach that combines foundational models (e.g., MedGemma-4B) with human knowledge via Low-Rank Adaptation (LoRA) to bridge causal gaps and enhance single-source domain generalization (SDG). Across eight DR and two SOZ datasets, GenEval achieves superior SDG performance, with average accuracy of 69.2\% (DR) and 81\% (SOZ), outperforming the strongest baselines by 9.4\% and 1.8\%, respectively.
Code and models are available at: 
\url{https://github.com/IMPACTLabASU/GenEval}.
\end{abstract}
\begin{figure*}[t]
  \centering
  \includegraphics[width=0.95\textwidth]{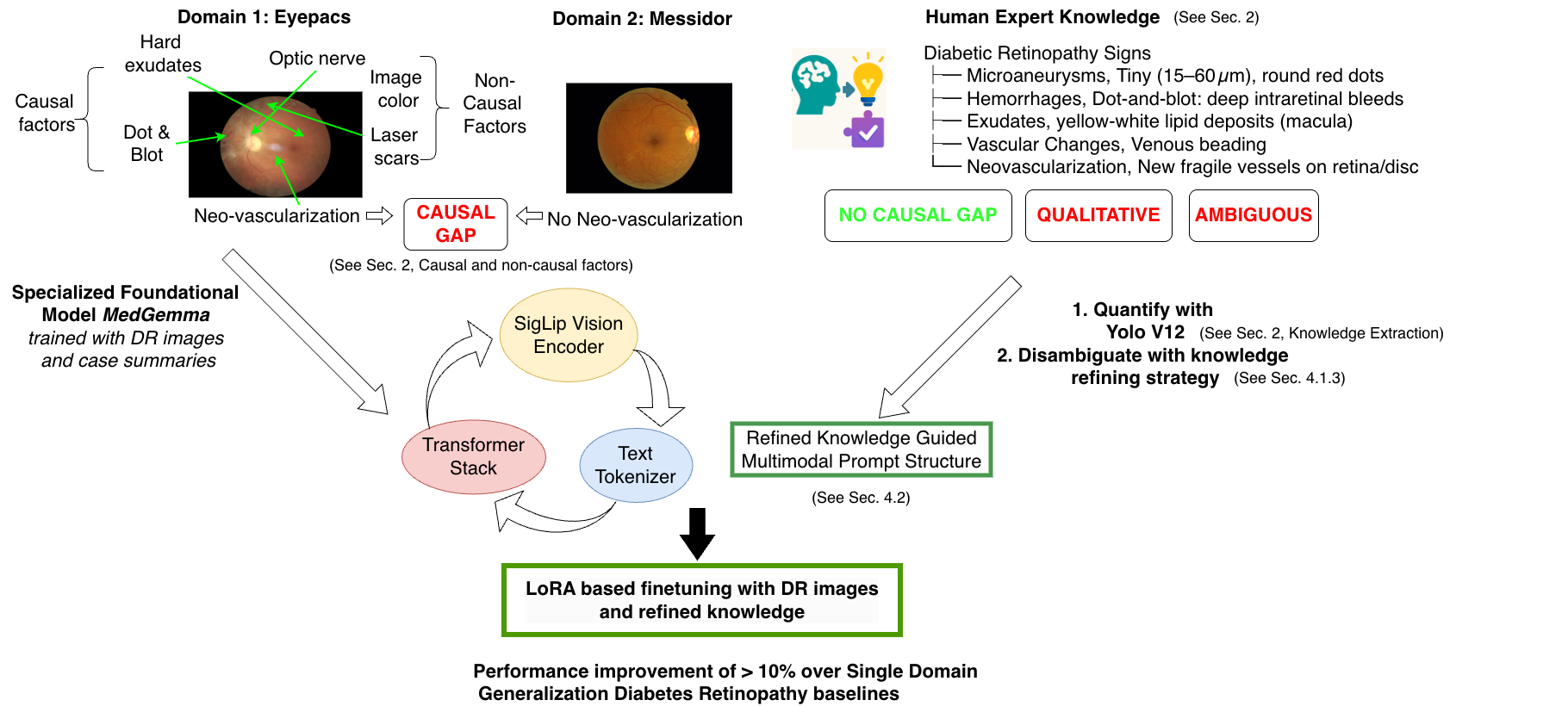}
  \caption{SDG attempts for DR has traditionally failed to consistently outperform ERM primarily due to existing gap in causal factors among data sets. Knowledge from human experts in the field can fill the causal gap but are usually qualitative and ambiguous. This paper demonstrates GenEval that quantifies human expert knowledge, refines them, and integrates specialized foundational model such as MedGemma-4B through LoRA based fine tuning to bridge the causal gap between domains while removing ambiguity of knowledge.}
  \label{fig:dr_stages}
\end{figure*}
\section{Introduction}
State-of-the-art (SOTA) supervised image classification engines face the problem of domain generalization (DG) which requires them to learn using image data $X_i \in X$ from source domains $\{D^s_i\}, i=\{1\ldots N\}$ collected in a specific environment, following a set protocol, using a given perception device, with a label set $Y$ and classify data points from target domains $\{D^t_i\}, i\in\{1\ldots M\}$ which may differ from $\{D^s_i\}$ in various aspects but with same class set. Traditionally, techniques to improve DG performance have not been successful in consistently outperforming empirical risk minimization (ERM)~\cite{arpit2022ensemble} or outperform ERM by statistically insignificant margins as seen for Diabetic Retinopathy (DR) grading~\cite{icdr_grading2003proposed} example in Table~\ref{tbl:MDG}. example in Table~\ref{tbl:MDG}. For example, when Messidor is the held-out target domain, SPSD-ViT achieves 64.0\% accuracy compared to ERM-ViT’s 62.7\%, yet the reported $p$-value of 0.09 indicates this improvement is not statistically significant. In practical clinical setting, access to data sources is limited that may prevent integration of datasets as required in multi-source DG (MDG). Hence a more practical problem faced in clinical setting is single source DG (SDG), where data from a single domain is available for training and the supervised classification technique should have good classification performance on target domains. This paper focuses on the more challenging setting of SDG, where SOTA techniques demonstrate even poorer performance than MDG~\cite{galappaththige2024spsd,gdrnet_miccai2023}.
  
\begin{table}
    \centering
     \caption{Comparison of SOTA DG improvement techniques across DR datasets. Leave-one-domain-out evaluation: each column shows accuracy when that domain is held out as the target and the remaining three are used for training. \textbf{\textit{Bold italic}} is highest accuracy, \underline{underlined} is second highest. p value denotes statistical significance of performance gain over ERM-ViT, p $>$ 0.05 implies insignificant result.}
    \scriptsize
    \begin{tabular}{@{}p{0.65 in} p{0.275 in} p{0.275 in} p{0.275 in} p{0.45 in} c@{}}
        \toprule
        \textbf{Method} & \textbf{Aptos} & \textbf{EyePACS} & \textbf{Messidor} & \textbf{Messidor 2} & \textbf{p value} \\
        \midrule
        MMD \cite{mmd2015long} & 49.3$\pm$1.0 & 69.3$\pm$1.1 & 64.1$\pm$4.8 & 69.6$\pm$0.6 & 0.3\\
        CDANN \cite{cdann2017long} & 48.1$\pm$0.7 & \underline{73.1$\pm$0.3} & 55.8$\pm$1.8 & 61.2$\pm$1.3 & 0.9\\
        SD-ViT \cite{sultana2022sd_vit} & 46.5$\pm$0.8 & 71.1$\pm$0.7 & 63.9$\pm$0.9 & 71.4$\pm$0.2 & 0.2\\
        SPSD-ViT \cite{galappaththige2024spsd} & \underline{51.6$\pm$1.1} & 73.3$\pm$0.4 & \underline{64.0$\pm$1.4} & \underline{72.9$\pm$0.1} & 0.09 \\
         ERM-ViT \cite{teterwak2025erm++} & \textit{48.5$\pm$0.9} & \textit{70.7$\pm$1.7} & \textit{62.7$\pm$1.6} & \textit{69.5$\pm$2.5} & NA \\
        GenEval (Ours) & \textbf{\textit{73.46}} & \textbf{\textit{83.18}} & \textbf{\textit{67.70}} & \textbf{\textit{79.64}} & \\
        \bottomrule
    \end{tabular}
    \label{tbl:MDG}
\vspace{0.1mm}
\end{table}
Recent theoretical advances~\cite{vuong2025domaingeneralizationfailview,cha2021swad,arpit2022ensemble} in generalization capacity of learned representations have uncovered two necessary conditions for feasibility of a machine $M$ that learns from a source domain $D^s$ to accurately classify data from target domain $D^t$: a) \textbf{causal cover}, source domain $D^s$ has representations of all causal factors that are required to classify data from target domain $D^t$, and b) \textbf{source risk minimization}, the machine $M$ is able to effectively learn all causal factors in the source domain.   

\noindent{\bf Causal Gap:} Domains of several critical medical applications including DR and resting state functional magnetic resonance imaging (rs-fMRI) based seizure onset zone (SOZ) detection may often violate causal cover condition. Figure \ref{fig:dr_stages} shows that DR images have both causal factors that directly affect DR grading classification such as hard exudates and Dot and Blot effects, and non-causal factors such as laser scars that do not directly affect classification but may have indirect correlation with DR severity since they are caused by frequent scans~\cite{Cuadros2009EyePACS}. Neo-vascularization is an important causal factor indicative of grade 4 DR (Proliferative DR). This is present in the EyePACS domain~\cite{Cuadros2009EyePACS}. However, no such indication is available in Messidor 1 domain~\cite{Decenciere2014Messidor}. This demonstrates a \textbf{causal gap} between the domains. Hence, a machine $M$ learning \textcolor{black}{classification logic} from Messidor 1 source is unlikely to perform accurate classification in the EyePACS domain~\cite{galappaththige2024spsd}. 

\noindent{\bf Solution hypothesis:} We observe that \textit{causal factors relevant for classification in any problem domain can be obtained from human expert knowledge} to compensate for causal gaps between domains. Hence a \textbf{multi-modal combination of image data and expert knowledge in textual form} can be used in a foundational vision language model to achieve better generalized performance across domains. 

\noindent{\bf Challenge:} Human knowledge is qualitative and ambiguous. For example, micro-aneurysms, a sign for Grade 1 DR are defined as tiny round red dots mostly between 15 - 60 $\mu m$ which can be easily confused with venous bleeding, which is a sign of Grade 3 DR. Effective usage of human knowledge requires accurate quantification and refinement to determine a subset of knowledge components that has maximum effect in reducing causal gap.  
Our principal {\bf contributions} are as follows (Figure \ref{fig:dr_stages}): 

\noindent{\bf 1. Novel causal gap quantification theory (Sec.~\ref{subsec:dcb}):} Based on the concept of conformal inference, we propose a novel theory of domain conformal bounds (DCB) that defines a distribution free mechanism to determine the difference in causal factor interrelationship between two domains. 

\noindent{\bf 2. Explanation of SOTA baseline DG performance (Sec.~\ref{subsec:sdcd}):} Based on the DCB theory, we define a novel metric, source domain conformance degree (SDCD) and prove that it has a positive correlation with SDG performance of a learning machine ($M$). This metric is used to explain the SDG performance of SOTA techniques.

\noindent{\bf 3. Knowledge quantification and refinement (Sec.~\ref{subsec:knowledge_refinement}):} We encode expert knowledge as real-valued vectors using a specialized single-stage detector and propositional logic, then use SDCD to measure causal-gap reduction and, via SDCD-guided ablations, select the subset that maximizes SDCD. 

\noindent{\bf 4. GenEval, multi-modal classification engine (Sec.~\ref{subsec:geneval_method}):} We fuse the refined knowledge and images into a multimodal prompt, fine-tune the specialized foundational model MedGemma-4B on source data via parameter-efficient Low-Rank Adaptation (LoRA)~\cite{hu2021lora}, then infer on the target domain.

\noindent\textbf{5. Comprehensive evaluation (Sec.~\ref{sec:evaluation})} across eight DR datasets (APTOS, EyePACS, Messidor, Messidor-2, IDRID, DeepDR, FGADR, RLDL) and two SOZ datasets (UNC, PCH), with emphasis on challenging SDG problem.

\section{Preliminaries}
\label{sec:prelim}
A domain $D$ is a tuple of image data $X$ with a probability distribution function $P(X)$. A source domain $D^s$ is a domain with a label set $Y$ for each image data in $X$. A target domain $D^t$ is a domain without any label set.
\begin{problem}{Single Source Domain Generalization -} Given a single source domain $D^s$, with labels $Y$ of $Q$ classes $\{C_1 \ldots C_Q\}$, learn a classification engine $\mathcal{M}$ to classify target domain $D^t$ data with the same $Q$ classes.    
\end{problem}

\noindent{\bf Causal and non-causal factors:} \textcolor{black}{Causal factors are application specific observations and correlations that are interpretable by experts and guide the labeling $Y$ for a dataset $X$.} Formally, the functional mapping between image data $X$ and label $Y$ in any domain is governed by causal $Z_c$ and non-causal $Z_n$ factors, such that data $X=F_x(Z_c,Z_n)$ is affected by both factors but the label is only affected by the causal factors $Y=F_y(Z_c)$, where $F_x$ and $F_y$ are nonlinear functions defining the image data $X$ and label $Y$.

\noindent{\bf Necessary conditions for DG:} Theoretical explorations in recent work have identified two necessary conditions for DG~\cite{arpit2022ensemble,vuong2025domaingeneralizationfailview}. A machine $M$ can learn from domain $D^s$ and accurately recognize target domain data $D^t$ if:  
\begin{enumerate}
    \item \textbf{Causal cover:} The target introduces no causal-factor \textcolor{black}{interrelationships} absent in the source domain.
    \item {\bf Source risk minimization:} the machine can effectively reduce cross entropy loss in the source domain.
\end{enumerate}

\noindent{\bf \textcolor{black}{Quantification of Causal Factor Interrelationship}:} \textcolor{black}{The source of causal factors is human expert knowledge. Given the causal factors, their interrelationship can be objectively evaluated from}: a) the data $X$ and label $Y$ through the function $F_x$ and $F_y$, or b) \textcolor{black}{expressing human expert knowledge obtained from various sources such as publications in the problem area or consultations as tacit logical propositions.} 

\noindent{\it \underline{a) Extraction of causal factor \textcolor{black}{interrelationship} from data}:} \textcolor{black}{Medical images capture physical processes with typically continuous spatio-temporal evolution modeled by partial differential equations (PDEs), constrained by boundary conditions. This assumption underlies image representations~\cite{Liu2014AdaptivePDE,Chen2015TNRD,Metzger2023DADA,Fainstein2024DUDF} and high-fidelity diffusion-based generation~\cite{song2021scorebasedgenerativemodelingstochastic}. PDEs with boundary conditions admit compartmental state-space realizations \cite{david2011hybrids,mathews2006nonlinear,slightly1998hybrid,bennett2005hybrid,camacho2007hybrid,smith2008hybrid}, allowing the data-generating maps \(F_x\) and \(F_y\) to be modeled as continuous multivariate nonlinear state spaces; consequently, Koopman theory \cite{koopman1931hamiltonian} applies, with boundaries/edges encoded via compartmental state variables.}

\noindent{\it Koopman theory:} Koopman theory linearizes nonlinear dynamics by applying a linear operator to observables in a lifted space, letting us expose causal interactions with linear tools. Koopman lifts the PDE-driven image dynamics to a sparse linear operator, letting us read causal factor interactions from nonlinear generators. We define a continuous relaxation $\Tilde{Y} = \Tilde{F}_y(Z_c)$ of the labeling function $F_y$ such that the original labels can be obtained from the relaxed labels $\Tilde{Y}$ by applying mutually exclusive thresholds for each class. The relaxed labels $\Tilde{Y}$ and image data $X$ can be expressed as a dynamical system defined by a function of causal factors $Z_c$, and non-causal factors $Z_n$ as -
\begin{equation}
    \scriptsize
    \Tilde{Y} = \Tilde{F}_y(Z_c), \frac{dX}{ds} = f_x(X,Z_c,Z_n).
\end{equation}
where $f_x=\frac{dF_x}{ds}$, and $s$ is an independent parameter used to traverse the image space.  

\noindent{\it Koopman approximations:} Koopman theory~\cite{chen2012variants} states that any nonlinear dynamical system can be expressed as a linear dynamical system in a transformed state space $\Omega$ through a measurement function $X = G(\Omega)$ such that: $\frac{d\Omega}{ds}=\mathcal{K}\Omega$, where $\mathcal{K}$ is the Koopman operator, usually a sparse coefficient matrix. Dynamic mode decomposition (DMD) and sparse identification of nonlinear dynamics (SINDY) limit $\Omega$ to a polynomial library over principal components of $X$; RIDGE regression~\cite{staal2004ridge} then estimates $\mathcal{K}$, with sequential thresholding forcing small coefficients to zero via $\mathcal{K}=STRIDGE(X,\Omega,G,\sigma)$~\cite{kaiser2018sparse,kaheman2020sindy}. Non-zero terms in the final $\mathcal{K}$ mark the causal factors in $X$.  

\noindent{\it Observation 1, difficulty in quantifying causal cover:} While source risk minimization can be quantified by monitoring the loss function, to the best of our knowledge there exists no such quantifier for causal cover. 

\noindent{\it Observation 2, image representation with differential dynamics:} \textcolor{black}{In practice, the functions \(f_x\) are (or can be closely approximated as) continuous and differentiable in imaging—especially medical imaging of spatially evolving physiology \cite{garnung2024physics,chan2003variational}. SOTA diffusion models likewise use differential dynamics to represent images \cite{song2021scorebasedgenerativemodelingstochastic}, and most prediction schemes trained with gradient descent implicitly assume differentiability \cite{arora2022understanding}. Even when \(f_x\) is non-differentiable or discontinuous, hybrid differential dynamics can represent it by augmenting the state to split dynamics across discontinuities, yielding piecewise models \cite{david2011hybrids,mathews2006nonlinear,slightly1998hybrid,bennett2005hybrid,camacho2007hybrid,smith2008hybrid}.}

\noindent{\it \underline{b) Extraction of causal factor \textcolor{black}{interrelationship} from human} \underline{knowledge:}} According to recent work~\cite{KambojTAI} human expert knowledge for a given class can be expressed in terms of propositional logic $\mathcal{L}_j(.)$. Knowledge comprises of semantic features $\mathcal{F}_i, i\in\{1\ldots R\}$ in the image data $X$. The presence of each semantic feature $\mathcal{F}_i$ can be quantified into real values. Human expert knowledge can be expressed as a first order propositional formula that expresses a class label $Y$ in terms of feature values in $\mathcal{F}$. Multiple such formulae define the oracle of knowledge on the classification problem. Truth evaluation of each formula provides a quantification of causal factor interrelationship embedded in human expert knowledge.  

\noindent{\bf Distance between two probability distribution of causal factors:} Given two multi-dimensional Gaussian distributions of $P=N(\mu_p,\Sigma_p)$ and $R=N(\mu_r,\Sigma_r)$, where $\mu_p,\mu_r$ are the means and $\Sigma_p,\Sigma_r$ are covariance matrices, the KL divergence~\cite{martins2004comparison} is a asymmetric metric that measures how different is distribution $R$ from $P$ as follows:
\begin{equation}
    \scriptsize
    d_k(P|R) = \frac{1}{2}[
    \ln\frac{\det \Sigma_r}{\det \Sigma_p}
    - k
    + \mathrm{tr}\bigl(\Sigma_r^{-1}\,\Sigma_p\bigr)
    + (\mu_r - \mu_p)^\top \,\Sigma_r^{-1}\,(\mu_r - \mu_p)],
\end{equation}
where $tr$ is trace and $det$ is determinant operation. It has been extensively used to explain domain shift~\cite{nguyen2022klguideddomainadaptation} and measure learning capability of AI engines~\cite{kim2021comparing}. 

\noindent{\it Observation 3, non-monotonicity:} The KL divergence is non-monotonous metric and cannot be used to quantitatively compare two distances: $d_k(R_1|R_2)$ and $d_k(R_1|P)$. It is thus not amenable to be used in optimization required for knowledge refinement.
In our paper, we utilize Mahalanobis distance $d_m$ given in Eqn. \ref{eqn:Maha}, to derive a monotonous divergence metric, \textit{source domain conformance degree} discussed in Section \ref{sec:Method}. Mahalanobis distance measures the distance of a point $x$ from a reference distribution as -
\begin{equation}
    \scriptsize
    \label{eqn:Maha}    
    d_m(x|P) = \sqrt{(x-\mu_p)^T\Sigma_p^{-1}(x-\mu_p)}
\end{equation}

\noindent{\it Observation 4, relation of Mahalanobis distance and KL divergence:} $d_m(x|P)$ appears as the last term of KL divergence, and has a positive correlation with KL divergence.
\paragraph{Knowledge Extraction}
 Domain-specific diagnostic rules were curated from human expert guidelines from ophthalmologists for DR (see Table~\ref{tab:dr_symptoms} in supplement) and epileptologists in SOZ (see Table \ref{tab:soz-clinical-signs} in supplement) and operationalized via automated feature extraction pipelines built using two open-source, modular tools: YOLOv12 and a retinal vessel segmentation model for DR and custom image processing code available in~\cite{Kamboj25CuKPL}. For lesion-level localization, we use YOLOv12, a SOTA one-stage detector for dense object environments that extends YOLOv5/YOLOv7 with C3K2 blocks, Spatial Pyramid Pooling Fast (SPPF), and C2PSA (Cross-Stage Partial Self-Attention)~\cite{tian2025yolov12attentioncentricrealtimeobject}. Its modified CSPDarkNet backbone and decoupled heads (objectness, classification, regression) achieve superior mean average precision (mAP) with real-time inference~\cite{wang2023comprehensive}. We train YOLOv12 to detect hemorrhages, hard exudates, and cotton wool spots; predicted bounding boxes are post-processed and validated via Intersection over Union (IoU) against expert-labeled fundus images, and each image yields up to a 14-dimensional real vector of IoU scores. For SOZ knowledge extraction we follow the image processing pipeline detailed in Kamboj et al~\cite{KambojTAI} and code shared in~\cite{Kamboj25CuKPL}.
\section{Related Work}
We situate GenEval-DR at the intersection of three threads:
\textit{(i)} medical domain generalisation (DG), and \textit{ii)} medical
vision–language models (VLMs), and \textit{(iii)} distribution-free
uncertainty quantification.

\noindent{\bf Domain Generalisation in Medical Imaging:} Performance of DR classifiers drops sharply when scanners, demographics
or prevalence shift across clinics~\cite{yoon2024domain_generalization_survey,dai2024deep}.
Classic DG methods align feature distributions with MMD or adversarial
losses~\cite{cdann2017long,mmd2015long,atwany2022drgen}, and extensive
augmentations such as BigAug simulate domain shift~\cite{zhang2020bigaug}.
Transformer baselines SD-ViT~\cite{sultana2022sd_vit} and the current
state-of-the-art SPSD-ViT~\cite{galappaththige2024spsd} replace CNN backbones
but still \textbf{assume target data are exchangeable with source
domains}. None of them can \emph{predict at deployment time} whether a
new domain data lies outside the training support, a prerequisite highlighted
by recent causal-DG theory~\cite{domain_survey_2024}.
GenEval-DR tackles this unmet need via an external DCB safety test.

\noindent{\bf Medical Vision Language Models:} Contrastive pre-training on paired image–text data enables zero-shot
transfer (BiomedCLIP~\cite{zhang2023biomedclip}, LLaVA-Med~\cite{li2023llava_med}).
For DR, CLIP-DR adapts CLIP with ranking-aware prompts and is the
strongest published VLM baseline~\cite{yu2024clipdr}.  Parameter-efficient
LoRA fine-tuning~\cite{hu2021lora} allows large medical VLMs such as
MedGemma-4B~\cite{medgemma_card_2025} to specialise to retinal imagery
without catastrophic forgetting. Yet VLMs alone remain brittle under
unseen shifts and lack uncertainty guarantees—an issue we address
by coupling MedGemma with DCB.

\noindent{\bf Causal structure change detection:} Changes in causal structure constitute anomalies. Although anomaly detection is well studied \cite{liu2024deep}, most methods are modality-specific and operate on raw data, whereas we require change detection over causal structures. Many techniques rely on extreme value theory (EVT) \cite{TSB} for out-of-distribution (OOD) detection, but EVT’s distributional assumptions cannot be verified \emph{a priori} for implicit causal structures, making it unreliable here. Conformal prediction provides finite-sample, distribution-free change detection \cite{vovk2005algorithmic,angelopoulos2021gentle,lu2021fair_conformal_medical,angelopoulos2024conformal_triage}.

\section{Methods}
\label{sec:Method}
We describe our theoretical framework (Figure. \ref{fig:geneval}) for causal cover evaluation and a novel VLM based technique for generalized image classification with DR as the example. 
\subsection{Evaluation of Causal Cover}
\label{subsec:eval_causal_cover}
\textcolor{black}{Given a source domain $D^s$ and target domain $D^t$, causal factors $\mathcal{K}$ can be extracted from each sample of each domain either from data or knowledge as shown in Section \ref{sec:prelim} with sets $D^s,D^t$ resulting in the sets $\mathcal{K}^s, \mathcal{K}^t$ respectively. We define a novel metric \textit{source domain conformance degree} (SDCD) denoted by $d_\rho$ that \textbf{evaluates the number of samples in the target domain $D^t$ that has the same causal factor interrelationship as the source domain $D^s$}. We use conformal inference~\cite{shafer2008tutorial,vovk2005algorithmic} to develop a distribution-agnostic algorithm that counts points in \(D^t\) whose causal factors fall within the distribution of the source-learned set \(\mathcal{K}^s\) from \(D^s\).} This is a two step process:
\subsubsection{Step 1: Domain conformal bounds (DCB)}
\label{subsec:dcb}
\textcolor{black}{Conceptually, DCB encodes the properties of the elements of a source domain $D^s$ into an interval that defines what it means for an image data to share the} \textcolor{black}{interrelationship patterns of causal factors} with image data in source domain. 
\textcolor{black}{We utilize Mahalanobis distance (Eqn. \ref{eqn:Maha}) to define a distance between two vectors of causal factor valuations derived from two image data points $X_i$ and $X_j$. For an image data $X_i$, the robustness metric $\rho(.,.)$ (Eqn. \ref{eqn:robust}) is the average Mahalanobis distance of $X_i$ from any other image $X^s$ in the source domain $D^s$. }
\begin{equation}
\scriptsize
\label{eqn:robust}
\rho(\mathcal{K}(X_i),D^s) = ({\sum\limits_{j=1,j\neq i}^{|D^s|}{d_m(\mathcal{K}(X_i),\mathcal{K}(X^s))}})/(|D^s|-1).
\end{equation} 
 Given the robustness function $\rho(.,.)$ in Eqn. \ref{eqn:robust}, conformal inference creates a prediction band $C \subset \mathcal{R}^2$ based on causal factor valuations from domain $D^s$ for a given \textit{miscoverage level} $\alpha \in \{0,1\}$, so that
\begin{equation}
\label{eqn:Property}
\scriptsize
Pr(\rho(\mathcal{K}(X_i),D^s) \in C) \geq 1 - \alpha, \textnormal{ iff } X_i \in D^s, \mathcal{K}(X_i) \in \mathcal{K}^s.
\end{equation} 
\textcolor{black}{We call this interval $C$, DCB and is a key property of source domain to encode interrelationships between causal factors obtained from any methodology. If the robustness metric of the causal factor valuations $\mathcal{K}$ from the data $X \in D^t$ in target domain falls inside the interval $C$ for source domain $D^s$, then image data $X\in D^t$ does not have any novel causal factor interrelationship that is not encompassed in images available in $D^s$ with probability $(1-\alpha)$. Algorithm \ref{alg:Prop} computes DCB $C$ defined by Eqn. \ref{eqn:Property} for a source domain $D^s$.}
\begin{algorithm}
	\caption{$C$ = \textbf{DCB Compute}($\{X_i Y_i\}_{i=1}^{|D^s|}$,$\alpha$,$\rho(.,.)$,$\mathcal{K}^s$)}
	\scriptsize
\begin{algorithmic}[1]
		\State Split $D^s$ into two equal sized subsets $I_T$ and $I_V$.
		\State Average robustness $\sigma = avg_i(\rho(\mathcal{K}(X_i),I_T{/\{X_i,Y_i\}}))$
        \State For each $X_v,Y_v$ compute residual $R_j = \rho(\mathcal{K}(X_v),I_T) - \sigma$
		\State $d$ = the kth smallest value in $\{R_j:j\in I_V\}$, $k = \lceil(|I_V|/2+1)(1-\alpha)\rceil$ 
        \State {\bf return} $C = [d-std(R_j),d+std(R_j)]$
\end{algorithmic}
	\label{alg:Prop}
\end{algorithm}
 Algorithm \ref{alg:Prop} takes the training data $(X_i,Y_i)\in D^s$ of source domain, miscoverage level $\alpha$ and the set of all causal factor model estimations $\mathcal{K}^s$ as input and provides DCB as output. It divides the training set, data and their model estimates, into two mutually exclusive subsets $I_T$ and $I_V$. For each $\mathcal{K}(X_i) \in I_T$, $\rho(\mathcal{K}(X_i),I_T{/\{X_i,Y_i\}})$ is computed, where $I_T{/\{X_i,Y_i\}}$ denotes the set $I_T$ with $\{X_i,Y_i\}$ removed. Let $\sigma = avg_i(\rho(\mathcal{K}(X_i),I_T{/\{X_i,Y_i\}}))$ be the mean value of the robustness metric in the training set $I_T$. From the validation set $I_V$ residual $R_v\rho(\mathcal{K}(X_v),I_T) - \sigma$ is derived for every element in $I_V$, the residual is arranged in ascending order. The algorithm then finds the residual $\sigma_C$ at the position $\lceil(|I_V|/2+1)(1-\alpha)\rceil$ and is used as the prediction range $C=[\sigma_C - std(R_v),\sigma_C+std(R_v)]$. We prove Theorem \ref{th:ThreePOne} that guarantees that the interval $C$ obtain from Algorithm \ref{alg:Prop} satisfies the guarantee in Eqn. \ref{eqn:Property}.
\begin{theorem}
\label{th:ThreePOne}
For any data point $X,Y \in D^s$, $Pr(\rho(\mathcal{K}(X),D^s) \in C = [\sigma_C- d, \sigma_C+d] ) \geq 1 - \alpha$, $\alpha > 0$ if and only if $Pr(\mathcal{K}(X)|\{X\}\in D^s) = Pr(\mathcal{K}(X)|\{X\}\in D^t)$, where $d$ is given by Algorithm \ref{alg:Prop}. (Proof in supplement) 
\end{theorem}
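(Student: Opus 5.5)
The argument is the standard split-conformal exchangeability argument, run in both directions.

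\emph{Sufficiency.} Assume $Pr(\mathcal{K}(X)\mid X\in D^s)=Pr(\mathcal{K}(X)\mid X\in D^t)$.

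\begin{enumerate}
\item Condition on the training half $I_T$. Then $\sigma$ is a fixed constant, and the score map $x\mapsto \rho(\mathcal{K}(x),I_T)-\sigma$ is a fixed measurable function.
\item The validation points of $I_V$ and a fresh test point are i.i.d. draws of the same law of $\mathcal{K}(X)$. By the hypothesis this holds whether the test point is taken from $D^s$ or from $D^t$. Hence their residuals $R_1,\dots,R_{|I_V|},R_{\mathrm{new}}$ are exchangeable.
\item By exchangeability, the rank of $R_{\mathrm{new}}$ among these $|I_V|+1$ values is uniform, up to ties, which I break at random.
\item The order-statistic index $k=\lceil(|I_V|/2+1)(1-\alpha)\rceil$ of Algorithm~\ref{alg:Prop} then yields the usual quantile inequality, $Pr(R_{\mathrm{new}}\le d)\ge 1-\alpha$. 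Symmetrizing with lower and upper quantiles gives the two-sided version.
\end{enumerate}

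To reach the stated interval $[\sigma_C-d,\sigma_C+d]$, I would reinterpret the algorithm's half-width as a quantile of $|R_j-\sigma_C|$, i.e.\ use absolute residuals as the conformity score. Repeating the rank argument with that score gives $Pr(|R_{\mathrm{new}}-\sigma_C|\le d)\ge 1-\alpha$. I would then replace $\rho(\cdot,I_T)$ by $\rho(\cdot,D^s)$, noting that both are averages of Mahalanobis distances over i.i.d. source points. I would either state the result for the $I_T$ version, which is what the algorithm actually uses, or control the difference by concentration.

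\emph{Necessity.} For the ``only if'' direction I would argue by contrapositive. Suppose the law of $\mathcal{K}(X)$ on $D^t$ differs from that on $D^s$. Exchangeability between calibration and test scores then fails, so the coverage guarantee is no longer implied. To get an actual violation rather than merely losing the guarantee, I would assume the shift is detectable by the score. Concretely, I would require that the distribution of $\rho(\mathcal{K}(X),I_T)$ under $D^t$ places more than $\alpha$ mass outside $C$. Mahalanobis distance, by Observation~4, tracks KL divergence, which supports this assumption, and under it coverage drops below $1-\alpha$.

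\emph{Main obstacle.} This converse direction is the hard step. Literally, different distributions of $\mathcal{K}$ can still give identical distributions of the scalar $\rho$; for example, a shift that preserves the distribution of the average Mahalanobis distance is invisible to $\rho$. The ``iff'' is therefore false without an identifiability condition. I would first try to prove it under the restriction that the map from the law of $\mathcal{K}$ to the law of $\rho$ is injective on the considered family. A concrete candidate is Gaussian causal-factor valuations differing in mean, where $\rho$ strictly increases with mean offset. Failing that, I would weaken the converse to ``if the interval coverage fails, then the distributions differ,'' which is just the contrapositive of sufficiency.

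A secondary technical point is the nonstandard index $k$ and the $\pm\,std(R_j)$ widening in the algorithm. These must be shown to produce an interval at least as wide as the exact conformal quantile interval, which widening only helps, so that the $1-\alpha$ bound is preserved.
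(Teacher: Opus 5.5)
Your sufficiency argument is the split-conformal quantile idea the paper's supplement invokes. The supplement only asserts a one-sided threshold $\rho_{\mathrm{adj}}\le\tau_\alpha$ holding with probability $1-\alpha$, and never addresses the two-sided $C$ or the ``only if'' direction. Your diagnosis that the converse is false as stated is correct.

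The genuine gap is the passage from a conformal quantile to the interval that Algorithm~\ref{alg:Prop} actually returns. There are two problems.
\begin{itemize}
\item The rank argument gives $Pr(R_{\mathrm{new}}\le R_{(k)})\ge k/(|I_V|+1)$. With the literal index $k=\lceil(|I_V|/2+1)(1-\alpha)\rceil$ this is only about $(1-\alpha)/2$, so your step 4 fails as written.
\item More seriously, $[d-\mathrm{std}(R_j),\,d+\mathrm{std}(R_j)]$ is not a widening of any $(1-\alpha)$-coverage set. It is a window of half-width one standard deviation centred at an order statistic, and its lower endpoint throws away mass.
\end{itemize}
Take roughly Gaussian residuals with mean $\mu$, standard deviation $s$, and the paper's $\alpha=0.05$:
\begin{itemize}
\item with the literal index, $d$ sits near the median and $C$ covers about $68\%$;
\item with the standard index $\lceil(|I_V|+1)(1-\alpha)\rceil$, $d\approx\mu+1.645s$, so $C\approx[\mu+0.645s,\,\mu+2.645s]$ covers about $26\%$.
\end{itemize}
So even when source and target laws coincide, the algorithm's $C$ falls well short of $1-\alpha$. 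The claim that ``widening only helps'' cannot be established, and the forward implication is false for Algorithm~\ref{alg:Prop}'s interval.

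Your absolute-residual construction is therefore a replacement of the algorithm, not a reinterpretation. It also needs one more repair. If the centre $\sigma_C$ is computed from $\{R_j\}_{j\in I_V}$, the scores $|R_j-\sigma_C|$ and $|R_{\mathrm{new}}-\sigma_C|$ are not exchangeable: the calibration point at the centre has score $0$. The centre must be fitted on $I_T$ or on a third split.

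For the converse, the injectivity route would not rescue it either. Coverage is the single scalar $Pr_{D^t}(\rho\in C)$, so distinct laws of $\rho$ can give the same coverage.
\begin{itemize}
\item In your Gaussian mean-shift family, a small offset leaves coverage at or above $1-\alpha$ whenever the exact conformal coverage strictly exceeds $1-\alpha$, by continuity in the offset.
\item A shift that concentrates target scores inside $C$ only raises coverage.
\end{itemize}
Your alternative assumption, that $D^t$ puts more than $\alpha$ mass outside $C$, is just the negation of coverage, so under it the converse is a tautology.

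What can actually be proved is one-directional: equal laws of $\mathcal{K}(X)$ imply $1-\alpha$ coverage for a genuinely conformal interval whose scores are computed against $I_T$. The reverse direction should be dropped or replaced by the contrapositive of sufficiency.
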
  
\begin{figure*}[t]
  \centering
  \includegraphics[width=0.9\textwidth]{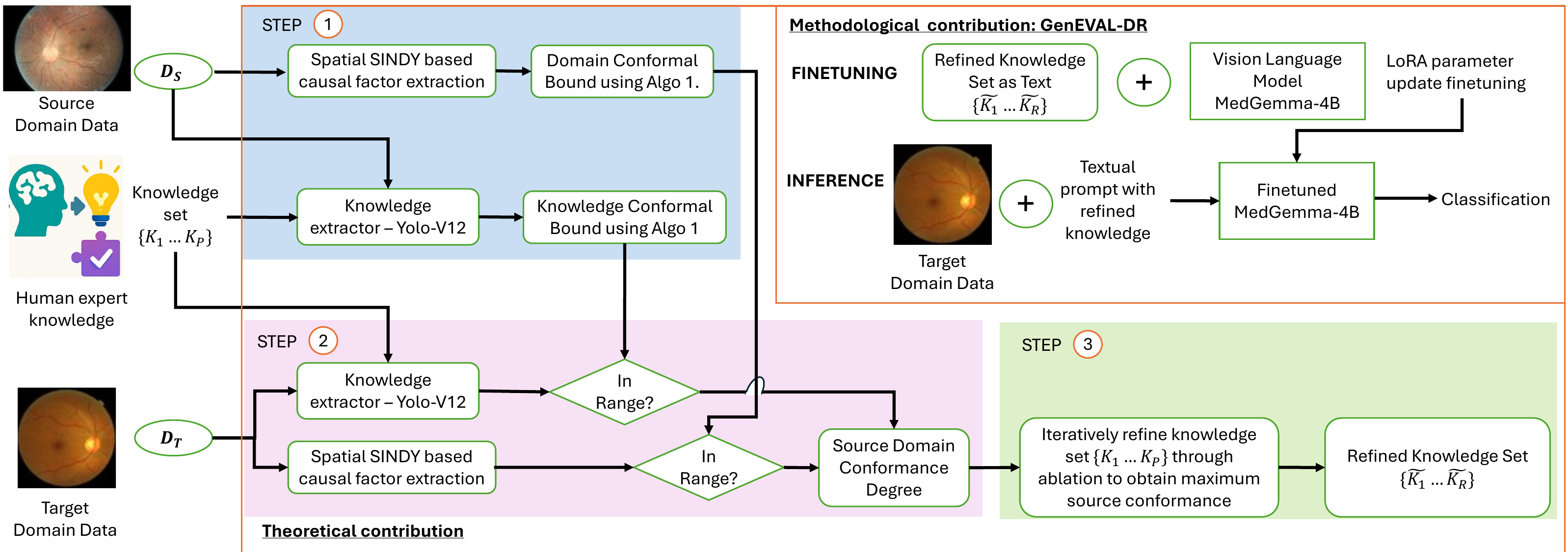}
  \caption{Contributions: a) Algorithms \ref{alg:Prop} and \ref{alg:U2} to obtain domain conformal bounds and source domain conformance degree. Utilizing SDCD, ablation studies are performed to refine the knowledge set into an optimal set that maximizes SDCD. b) GenEval, a specialized foundational vision language model based method that integrates image and expert knowledge in textual form for a multi-modal prompt driven DR grading and SOZ detection schemes.}
  \label{fig:geneval}
\end{figure*}
 \begin{algorithm}[H]
	\caption{\textbf{SDCD}($D^s$,$D^t$,$\rho(.,.)$,$\mathcal{K}^s$,$\mathcal{K}^t$,$\sigma$,$C$)}
	\scriptsize
\begin{algorithmic}[1]
		\State \textbf{output} Percentage of $D^t$ with causal cover.
		\For {each $\{X\} \in D^t$ }
		\State Compute residual $R = \rho(\mathcal{K}(X),D^s) - \sigma$
		\If{$R \in C$} 
            \State add to the list of satisfaction
            \EndIf
            \EndFor
		\State\textbf{return} percentage causal cover satisfaction  
\end{algorithmic}
	\label{alg:U2}
\end{algorithm}
 \subsubsection{Step 2: Source domain conformance degree}
\label{subsec:sdcd}
 Utilizing Theorem \ref{th:ThreePOne} and Algorithm \ref{alg:Prop}, we derived a robustness range that encodes the causal factors of source domain $D^s$. The \textit{SDCD} extraction mechanism in Algorithm \ref{alg:U2} takes each sample from target domain $D^t$ and without its labels, evaluates whether the sample is within the DCB of $D^s$.  
 \begin{lemma}
 \label{lem:SDCD}
 The SDCD metric is: i) a monotonous function of the robustness residue in Eqn. \ref{eqn:robust}, and ii) has positive correlation with the performance of a learning machine $M$ that minimizes empirical risk in source domain $D^s$ and tested on target domain for large datasets. 
\noindent{\bf Proof:} Discussed in supplement Section \ref{sec:Proof}.
 \end{lemma}
\subsubsection{Step 3: Knowledge refinement through ablation}
\label{subsec:knowledge_refinement}
Using SDCD metric we can carry out ablation studies to determine the best combination of knowledge that has the maximum SDCD. This will guarantee the best accuracy for any machine $M$ that integrates the refined knowledge according to Lemma \ref{lem:SDCD}. We use the quantification strategy described in Section \ref{sec:prelim} to convert the expert knowledge component to a 14 dimensional real vector. Through several iterations we remove the knowledge components one by one thereby reducing the dimension of the real vector and compute the average SDCD metric for each pair of source and target domain. We keep the configuration that gives the best average SDCD metric value.  
\subsection{GenEval Method for SDG in DR}
\label{subsec:geneval_method}
\textbf{Medical Vision-Language Foundation.} GenEval leverages MedGemma-4B~\cite{medgemma_card_2025} VLM pre-trained on extensive medical image-text pairs. Unlike general-purpose VLMs that struggle with clinical diagnostic reasoning~\cite{radford2021clip,zhang2023biomedclip,li2023llava_med}, MedGemma-4B incorporates medical knowledge through its training on clinical reports, textbooks, and peer-reviewed literature. The model architecture consists of three key components working in synergy~\cite{anatomical_vlm_miccai2024,pathchat2024medical,shen2023med_flamingo}: a) The clinical vision encoder processes $224 \times 224$ pixel images through a multi-head attention mechanism specifically optimized for medically relevant components such as retinal structures~\cite{gulshan2016development,dai2024deep}. It captures both low-level pathological features (microaneurysms, hemorrhages, exudates) and high-level diagnostic patterns (neovascularization, macular edema, venous beading)~\cite{icdr_grading2003proposed,zhang2023biomedclip,anatomical_vlm_miccai2024,li2023llava_med}. 
\begin{table}[t]
  \centering
  \caption{LoRA setup and fine-tuning time. A = \textsc{APTOS 2019}, E = \textsc{EyePACS}, M1 = \textsc{Messidor-1}, M2 = \textsc{Messidor-2}.}
  \label{tab:lora-params}
  \scriptsize
  \setlength{\tabcolsep}{6pt}
  \begin{tabular}{@{} p{3.2in} @{}} 
    \toprule
    \textbf{Key LoRA hyperparameters} \\
    Adapter rank ($r$) = 16; scaling factor ($\alpha$) = 16; LoRA dropout = 0.05; 
    target modules = \{\texttt{q\_proj, k\_proj, v\_proj, o\_proj, up\_proj, down\_proj, gate\_proj}\}; 
    modules to save = \{\texttt{lm\_head, embed\_tokens}\}; 
    trainable params = $\approx$95M (2.4\% of 4B); task type = \texttt{CAUSAL\_LM}. \\[0.25em]
    \hline
    \textbf{Training time (hh:mm:ss)} \\
    \textit{Single domain:} A = 01:03:35; E = 10:14:20; M1 = 00:20:41; M2 = 00:34:11. \\
    \textit{Multi domain:} A+E+M1 = 12:01:30; A+E+M2 = 12:23:19; A+M1+M2 = 01:58:43; E+M1+M2 = 11:18:07. \\
    \hline
    \textbf{ Inference time}\\
    \textit{Single domain:} 423.96 ± 33.52 ms [383.14, 458.72]; \textit{Multi domain:} 385.46 ± 29.24 ms [359.71, 427.35].\\
    \hline
    \textbf{YOLO lesion detector} \textit{End-to-end (wall):} 632.78 $\pm$ 119.58 ms [528.58, 783.62]. \\
    \bottomrule
  \end{tabular}
\end{table}
\textbf{Parameter-Efficient Clinical Adaptation.} To preserve MedGemma-4B's extensive pre-trained clinical knowledge while adapting to specific classification tasks, we employ Low-Rank Adaptation (LoRA)~\cite{hu2021lora} with novel architectural modifications inspired by recent advances in asymmetric parameter-efficient fine-tuning~\cite{liu2024hydralora,adamw2017decoupled}. For each transformer layer $l$, we inject trainable low-rank matrices following the decomposition:
\begin{equation}
\scriptsize
\mathbf{W}_l' = \mathbf{W}_l + \alpha \cdot \mathbf{B}_l\mathbf{A}_l
\end{equation}
where $\mathbf{W}_l \in \mathbb{R}^{d \times d}$ represents the frozen pre-trained weight matrix, $\mathbf{B}_l \in \mathbb{R}^{d \times r}$ and $\mathbf{A}_l \in \mathbb{R}^{r \times d}$ are trainable matrices with rank $r \ll d$, and $\alpha$ is a scaling factor~\cite{hu2021lora,liu2024hydralora}. We set $r = 16$ and $\alpha = 16$ based on empirical validation across multiple DR datasets, following established optimization protocols~\cite{cosine_annealing2016sgdr,gradient_clipping2013difficulty}.
This approach provides three critical advantages for medical applications~\cite{anatomical_vlm_miccai2024,pathchat2024medical}. \textbf{Computational efficiency} is achieved by updating only approximately 2\% of total parameters ($\sim$95M out of 4B), drastically reducing GPU memory requirements and training time while maintaining competitive performance~\cite{hu2021lora,liu2024hydralora}. \textbf{Knowledge preservation} ensures that the frozen weights retain broad clinical reasoning capabilities acquired during pre-training, preventing catastrophic forgetting of medical knowledge~\cite{zhang2023biomedclip,li2023llava_med}. \textbf{Clinical interpretability} emerges from the low-rank structure, as the adapter matrices $\mathbf{B}_l$ and $\mathbf{A}_l$ can be analyzed to understand task-specific diagnostic adjustments and feature importance patterns~\cite{mutual_information_neural2018mine,anatomical_vlm_miccai2024}.
\textbf{Knowledge refinement and structured clinical prompting.} Effective DR classification requires prompts that mirror clinical diagnostic workflows and leverage the International Clinical Diabetic Retinopathy Disease Severity Scale~\cite{icdr_grading2003proposed,label_smoothing2016rethinking}. We develop two complementary prompting strategies optimized for different deployment scenarios, drawing from established clinical practice guidelines and recent advances in medical language model prompting~\cite{pathchat2024medical,shen2023med_flamingo,anatomical_vlm_miccai2024}. 
\noindent{\bf Zero shot evaluation:} We employ a comprehensive clinical instruction that positions MedGemma-4B as a specialist diagnostic system: 
\begin{snugshade}
\scriptsize
\texttt{``You are a medical AI system specialized in diabetic retinopathy 
diagnosis from retinal fundus images. Classify DR severity (0--4) based on 
established clinical criteria: 0—No DR: normal retina with no 
microaneurysms; 1—Mild NPDR: few microaneurysms only; 2—Moderate NPDR: multiple 
microaneurysms, hemorrhages, possibly cotton-wool spots; 3—Severe NPDR: 
extensive hemorrhages in all four quadrants, venous beading, IRMA without 
neovascularization; 4—Proliferative DR: neovascularization, vitreous 
hemorrhage, or retinal detachment. Respond with only the number (0--4).''}
\end{snugshade}
\noindent\textbf{Fine-tuned models}, we incorporate role-playing elements and contextual medical reasoning through an expert consultation prompt~\cite{zhang2023biomedclip,li2023llava_med}:
\begin{snugshade}
\scriptsize
\texttt{``You are an ophthalmologist with 15 years of experience in 
diabetic retinopathy screening. Examine the macula, optic disc, and 
four quadrants for microaneurysms, hemorrhages, hard/soft exudates, 
venous abnormalities, and IRMA. Classify using \{0,1,2,3,4\} and 
return only the severity number.''}
\end{snugshade}

\noindent The refined knowledge obtained from the SDCD-based refinement strategy is incorporated directly into the prompt. This structured approach incorporates clinical reasoning patterns and systematic examination protocols that mirror actual diagnostic workflows~\cite{gulshan2016development,dai2024deep,icdr_grading2003proposed}, improving both accuracy and consistency in model predictions across different clinical contexts.

\section{Evaluation, Results and Analysis}
\label{sec:evaluation}
There are five evaluation goals: a) Validating the monotonicity property in Lemma \ref{lem:SDCD}, b) quantify the effect of knowledge integration on SDCD; c) MDG performance of GenEval, d) SDG performance of GenEval, and e) Performance comparison of GenEval with untuned VLMs.
\noindent{\bf Benchmark datasets:} DR datasets: EyePACS~\cite{Cuadros2009EyePACS}, APTOS~\cite{APTOS2019BlindnessDetection}, Messidor-1~\cite{Decenciere2014Messidor}, Messidor-2~\cite{Messidor2Consortium2024}. Labels are harmonized to the 5-class ICDR scale (0 to 4) using dataset grades or standard mappings. In the \emph{extended SDG} setting, we fix EyePACS as the source and evaluate on four additional DR targets: IDRiD~\cite{h25w98-18}, DeepDR~\cite{LIU2022100512}, FGADR~\cite{9257400}, and RLDL~\cite{wei2021learn}. Seizure onset zone (SOZ): two institutional rs-fMRI cohorts, Phoenix Children’s Hospital (PCH) and University of North Carolina (UNC), processed with one unified pipeline; SOZ labels use a 3-class scheme 0–2 where 0 = noise, 1 = resting state, and 2 = SOZ~\cite{Kamboj25CuKPL}.
\noindent{\bf Baselines:} For DR, we use the strongest baselines reported in~\cite{galappaththige2024spsd}, DECO~\cite{deco_miccai2024} and GDRNet~\cite{gdrnet_miccai2023}. For untuned VLMs on DR, we include CLIP~\cite{radford2021clip}, OrdinalCLIP~\cite{li2022ordinalclip}, and CLIP-DR~\cite{yu2024clipdr}. For SOZ cross-site SDG, we compare against the CuPKL for zero-shot rare-event classification~\cite{Kamboj25CuKPL}.

\noindent{\bf Metrics:} For DR (ICDR 0–4), our primary metric for SDG/MDG is overall multi-class accuracy on each target domain. For DR comparisons with untuned VLMs, we report macro F1 (mean F1 over the five DR classes). For SOZ cross-site SDG (4-class: 0–3), we use macro F1 as the primary metric due to class imbalance, and additionally report accuracy, macro precision, and macro recall in the supplement. We also report SDCD (\%) for causal-cover analysis (computed with $\alpha{=}0.05$ DCB) where relevant.
\noindent{\bf Statistics:} We use the t-test to evaluate p value of accuracy difference between GenEval and baselines. SDG tables are repeated with p values in appendix Section 
\ref{sec:rep}

\begin{table}[htbp]
\centering
\scriptsize
\caption{SDG results on diabetic retinopathy datasets. Knowledge is \textbf{K} and Data is \textbf{D} We evaluate our approach across different source-target domain pairs using state-of-the-art methods. All results are reported as classification accuracy (\%).}
\label{tab:domain_generalization_results}
\begin{tabular}{@{}p{0.25 in}@{}p{0.3 in}p{0.625 in}@{}p{0.425 in}@{}p{0.35 in}@{}p{0.25 in}p{0.25 in}p{0.25 in}@{}}
\toprule
\textbf{Sou rce} & \textbf{Target} & \textbf{Best Baseline} & \textbf{Baseline Accuracy} & \textbf{K SDCD} & \textbf{D SDCD} & \textbf{GenEval} & \textbf{K + D SDCD} \\
\midrule
{Mess} & Aptos & SPSD-ViT~\cite{galappaththige2024spsd} & 48.3 \%& 92.7\% & 16\% & 56.0\% & 98.03\% \\
{idor} & Messidor2 & SPSD-ViT~\cite{galappaththige2024spsd} & 62.2\% & 99.1\% & 87.1\% & 65.18\% & 99.14\% \\
& EyePACS & SPSD-ViT~\cite{galappaththige2024spsd} & 57.4\% & 17.02\% & 36.4\% & 80.04\% & 94.94\% \\
\midrule
{Mess} & Aptos & SPSD-ViT~\cite{galappaththige2024spsd} & 52.8\% & 32.1 \%& 17.7\% & 69.69\% & 76.3\%\\
idor2 & Messidor & SD-ViT~\cite{sultana2022sd_vit} & 61.0\% & 99.9\% & 98.2\% & 67.67\% & 100.00\% \\
& EyePACS & SPSD-ViT~\cite{galappaththige2024spsd} & 72.5\% & 34.6\% & 41.31\% & 77.79\% & 96.34\% \\
\midrule
\multirow{3}{*}{Aptos} & Messidor2 & DRGen~\cite{atwany2022drgen} & 61.0\% & 62.5\% & 79.04\% & 64.01\% & 88.10\% \\
& Messidor & DRGen~\cite{atwany2022drgen} & 46.7\% & 56.4 \%& 73.9\% & 49.0\% & 100.00\% \\
& EyePACS & SD-ViT~\cite{sultana2022sd_vit} & 72.0\% & 59.4\% & 77.82\% & 77.78\% & 99.97\% \\
\midrule
{Eye} & Aptos & SPSD-ViT~\cite{galappaththige2024spsd} & 75.1\% & 46.9\% & 99.8\% & 73.16\% & 99.84\% \\
pacs & Messidor2 & DRGen~\cite{atwany2022drgen} & 65.4\% & 99.2\% & 99.7\% & 80.5\% & 99.83\% \\
& Messidor & DRGen~\cite{atwany2022drgen} & 54.6\% & 99.1\% & 71.1\% & 69.48\% & 100.00\% \\
\bottomrule
\end{tabular}

\end{table}
\vspace{-0.175 in}\paragraph{1. Validation of Lemma \ref{lem:SDCD}}
The SDCD from data is shown in Table \ref{tab:domain_generalization_results}, column \textbf{D SDCD}. The Pearson's correlation coefficient of 0.692 (p 0.02) showing a positive correlation between SDCD and Accuracy of the best baseline. This demonstrates the monotonicity of SDCD in real world data. 
\begin{table}[h]
\centering
\scriptsize
\caption{Knowledge refinement through ablations guided by SDCD and corresponding accuracy.}
\label{tab:ablation_sdcd_accuracy}
\begin{tabular}{lcc}
\toprule
\textbf{Ablation} & \textbf{SDCD (\%)} & \textbf{Accuracy (\%)} \\
no ablation & 59.000 & 65.01\\
remove microaneurysms & 68.000 & 70.02 \\
remove hemorrhages, or exudates  & 71.683 & 71.05 \\
remove venous beading & 82.765 & 73.21 \\
\textbf{remove Neovascularization}  & \textbf{82.81} & \textbf{73.23} \\
\bottomrule
\end{tabular}
\vspace{0.1mm}
\end{table}

\paragraph{2. Effect of knowledge integration}
Table \ref{tab:domain_generalization_results} shows the SDCD extracted from refined knowledge (column \textbf{K SDCD}) and the SDCD extracted from the integration of knowledge and data (column \textbf{K + D SDCD}). It shows significant improvement in SDCD with respect to data. The integration also helped GenEval to increase accuracy in all SDG settings. 
\noindent{\it Knowledge refinement using SDCD:} Table \ref{tab:ablation_sdcd_accuracy} shows the SDCD for removing different knowledge components. We see that removing neovascularization gives the best SDCD. This is because neovascularization or new bleeding veins are extremely difficult to identify and YOLOv12 fails to do so. We also see that SDCD is correlated with accuracy and removing neovascularization gives the highest accuracy. This results in the prompt  
for fine tuned models.

\paragraph{3. Sensitivity analysis of SDCD }
\begin{figure}[t]
  \centering  \includegraphics[width=0.8\columnwidth]{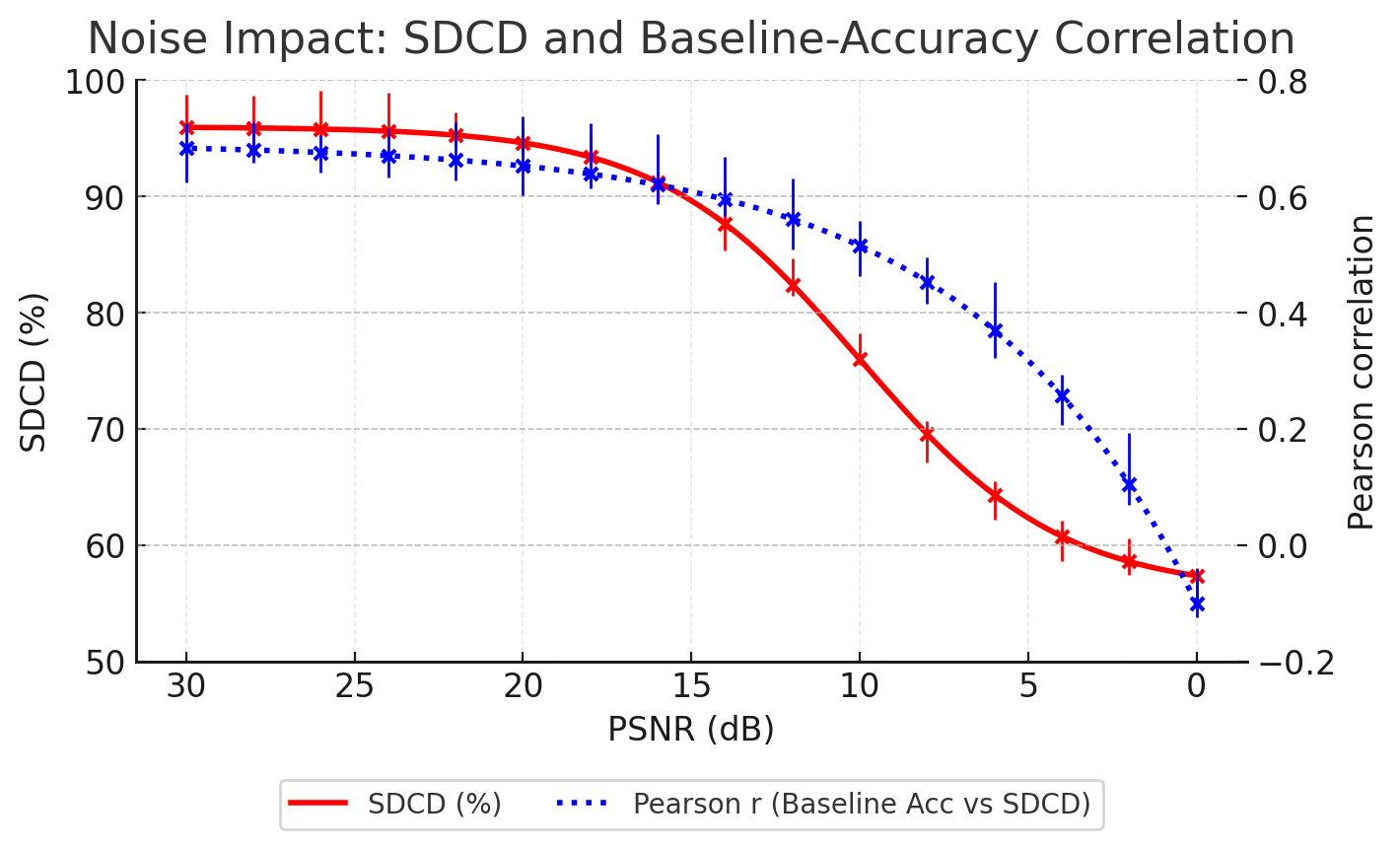}
  \caption{Variation of SDCD metric with pixel signal to noise ratio due to instability in Mahalanobis distance, and effect on correlation of SDCD with Accuracy given by Lemma \ref{lem:SDCD}.}
  \label{fig:SDCDFig}
\vspace{0.1mm}
\end{figure}
The Mahalanobis distance scales inversely with the covariance matrix; thus, as noise increases it becomes unstable, affecting SDCD. Using Table~\ref{tab:domain_generalization_results} with EyePACS as source and all other datasets as targets, we added Gaussian noise and swept peak signal to noise ratio (PSNR) from 30\,dB to 0\,dB. As PSNR decreases, Figure~\ref{fig:SDCDFig} shows SDCD (from raw data) stays high initially, then drops rapidly to an average of $\approx56\%$ across datasets. Concurrently, the Pearson correlation between SDCD and classification accuracy falls from $0.692$ ($p<0.02$) to $-0.1$ ($p<0.8$). SDCD also depends on YOLO-based knowledge extraction for DR: YOLOv11 outperforms YOLOv12, yielding slightly higher SDCD (Table~\ref{tab:Yolo} in supplement). However, when combined with data, the final $\text{K}+\text{D}$ SDCD increases only marginally, producing a statistically insignificant gain in GenEval accuracy.
\vspace{-0.175 in}\paragraph{4. Multi-Source Domain Generalization}
Table~\ref{tbl:MDG} presents fair comparisons where all methods are trained on three datasets and evaluated on the fourth. GenEval demonstrates substantial (+5.71\%) improvement over SPSD-ViT (79.21\% vs 73.3\%),
the best performing SOTA.
\vspace{-0.175 in}\paragraph{5. Single-Source Domain Generalization}
Table \ref{tab:domain_generalization_results} summarizes the SDG results for GenEval as compared to the best baseline available for the given datasets. Tables \ref{tab:eyepacs_source_pvalue} through \ref{tab:messidor2_source_pvalue} in supplement present systematic evaluation across all 12 source-target pairs and more baselines.
\noindent\textit{Cross-Domain Transfer Analysis:} Clinical datasets with standardized protocols (EyePACS, Messidor series) serve as more effective source domains than resource-limited collections (APTOS). The performance hierarchy reflects imaging quality and acquisition standardization, with implications for real-world deployment strategies.
\noindent\textit{Domain Difficulty Assessment:} APTOS consistently presents the greatest challenge both as source and target domain, reflecting the reality of deploying AI systems in resource-constrained environments. 
\noindent{\it Extended Single-Source Domain Generalization:} we fix the training domain to EyePACS and vary many targets. This is more challenging that varying domains and evaluating on a fixed large-scale target set since some source domain is bound to have good agreement with the target. Our design matches the deployment we care about in practice (train once on a single source clinic, deploy widely) and avoids conflating results with the choice of source. Concretely, we add IDRiD \cite{h25w98-18}, DeepDR \cite{LIU2022100512}, FGADR \cite{9257400}, and RLDL \cite{wei2021learn} to the core targets so that the only changing factor is the target-domain shift, not the training model or pipeline. For comparability, we use published single-source accuracies from prior work for baselines rather than re-implementing them, because their setups change the training domain while ours holds it fixed. This keeps the comparison apples-to-apples for \emph{fixed-source, multi-target} SDG and makes clear that any gains reflect robustness to diverse target distributions under a single, harmonized source.
\begin{table}[h]
  \centering
  \caption{Extended single-source domain generalization on diabetic retinopathy. Train on EyePACS, test on six external targets. Metric: accuracy (\%). Best scores in \textbf{bold}.}
  \label{tab:dr-extended-sdg}
  \scriptsize
  \setlength{\tabcolsep}{4pt}
  \renewcommand{\arraystretch}{1.15}
  \begin{tabular}{@{} l r r r r r r r @{}} 
    \toprule
    \textbf{Method} & \textbf{APTOS} & \textbf{Messidor} & \textbf{IDRiD} & \textbf{DeepDR} & \textbf{FGADR} & \textbf{RLDL} & \textbf{Avg} \\
    \hline
    GDRNet & 52.8 & 65.7 & 70.0 & 40.0 & 7.5 & 44.3 & 46.7 \\
    DECO   & 59.7 & \textbf{70.1} & \textbf{74.8} & 40.3 & 9.9 & 49.3 & 50.68 \\
    GenEval & \textbf{73.2} & 69.5 & 70.6 & \textbf{59.2} & \textbf{56.9} & \textbf{67.6} & \textbf{66.2} \\
    \bottomrule
  \end{tabular}
\end{table}
GenEval outperforms DECO \cite{deco_miccai2024} and GDRNet\cite{gdrnet_miccai2023} on all benchmarks for the challenging task of SDG on five additional datasets without any change to the model trained with EyePACS dataset. Largest gains occur on DeepDR, FGADR, and RLDL while performance is near-parity on Messidor (-0.6 pp) and slightly lower on IDRiD (-4.2 pp). 
Under a fixed-source, multi-target design, these results benchmarks a deployment-relevant scenario applicable broadly across clinics.
\begin{table}[!htbp]
\centering
\caption{VLM SDG F1 score (\%) for DR Classification.}
\label{tab:vlm_performance}
\small
\setlength{\tabcolsep}{10pt}
\renewcommand{\arraystretch}{1.2}
\begin{tabular}{lccc}
\toprule
\textbf{VLM Method} & \textbf{APTOS} & \textbf{Messidor} & \textbf{Average} \\
\midrule
CLIP~\cite{radford2021clip}          & 44.3 & 39.6 & 41.9 \\
OrdinalCLIP~\cite{li2022ordinalclip} & 45.7 & 41.8 & 43.8 \\
CLIP-DR~\cite{yu2024clipdr}          & 46.3 & 47.3 & 46.8 \\
GenEval- (Ours)          & \textbf{72.0} & \textbf{78.2} & \textbf{75.1} \\
\bottomrule
\end{tabular}
\end{table}
\vspace{-0.175 in}\paragraph{6. Vision-Language Model Performance}
GenEval-VLM demonstrates substantial improvements over existing vision-language approaches for cross-domain diabetic retinopathy classification, achieving significant F1 score gains of +28.3\% average improvement across challenging domain shifts \cite{radford2021learning,li2022ordinalclip,yu2024clipdr}. Our method achieves 72.0\% F1 score on APTOS and 78.2\% on Messidor datasets, representing improvements of +25.7\% and +30.9\% respectively over the previous best-performing CLIP-DR method, while maintaining competitive AUC performance \cite{anatomical_vlm_miccai2024,zhang2023biomedclip}. The integration of domain conformal boundaries with MedGemma-4B's medical pre-training enables robust generalization across diverse clinical environments, with 95\% coverage guarantee providing essential reliability assessment for clinical deployment \cite{angelopoulos2024conformal_triage,mehrtens2024conformal_pitfalls,hirsch2024noise_robust_conformal}. These performance improvements, combined with parameter-efficient LoRA adaptation, position GenEval-VLM as a clinically viable solution for multi-institutional DR screening programs where both accuracy and reliability assessment are critical for patient safety \cite{trustworthy_clinical_ai_2024,hu2021lora}.
\vspace{-0.175 in}\paragraph{7. Single source domain generalization on SOZ }
\label{sec:soz-sdg}
Table~\ref{tab:soz-sdg-f1} shows that GenEval achieves superior average F1 performance (90.0\%) compared to CuPKL GPT-4o (88.1\%), demonstrating more consistent generalization across centers. Notably, while CuPKL excels on PCH (93.8\% F1), it shows reduced performance on UNC (82.3\% F1), whereas GenEval maintains more stable performance across both sites (89.0\% and 91.0\% F1, respectively).

\begin{table}[H]
  \centering
  \caption{Single-source domain generalization on SOZ detection. Results are reported as F1-score (\%). Best scores in \textbf{bold}.}
  \label{tab:soz-sdg-f1}
  \scriptsize
  \setlength{\tabcolsep}{8pt}
  \renewcommand{\arraystretch}{1.15}
  \begin{tabular}{@{} p{0.1 in} p{0.1 in} r p{0.5 in} p{0.4 in} p{0.6 in} @{}} 
    \toprule
    \textbf{Source} & \textbf{Target} & \textbf{CuPKL \cite{Kamboj25CuKPL}} & CuPKL Avg & \textbf{GenEval} & {GenEval Avg} \\
    \hline
    PCH & UNC & 82.3 & 88.1 &\textbf{91.0} & 90 \\
    UNC & PCH & \textbf{93.8} & & 89.0 & \\
    \bottomrule
  \end{tabular}
\end{table}

The superior average F1 score indicates GenEval's enhanced ability to balance precision and recall in detecting epileptogenic regions across different clinical environments. This consistency is particularly valuable for clinical deployment where reliable performance across diverse institutional settings is essential. We report detailed accuracy, precision, and recall metrics in supplementary Table~\ref{tab:soz-sdg}, along with comprehensive cross-site transfer analysis.

\section{Conclusions}
\label{sec:conclusion}
SDG is a cornerstone challenge for deploying AI systems in real‐world settings, arguably the Achilles’ heel of machine learning, and especially critical in medical imaging where shifts between clinical sites or imaging devices can severely degrade performance. To probe the feasibility of transferring a model to an unseen target domain, our paper rigorously evaluates two complementary aspects: first, whether the source domain furnishes the necessary causal cover for the target distribution, and second, integrated human expert knowledge with foundational models to potentially encapsulate all underlying causal factors that drive variation across domains. We shown that a knowledge refinement mechanism is necessary to derive an optimal set of knowledge component that reduces ambiguity. GenEval-DR then integrates DR images and refined knowledge as textual prompts to classify DR images. GenEval-DR not only predicts generalization success, but also empowers researchers to make principled choices about which source domains to include during training –paving the way for a formal, theory-based treatment of DG that relies on necessary and sufficient conditions rather than heuristic fixes.

\textbf{Limitations:} GenEval-DR assumes that the underlying data‐generation mechanism is continuous and differentiable. In real-world systems, particularly those that combine digital computations with physical measurement devices, sharp transitions or threshold effects can violate this assumption. Although such non-differentiable behavior can be locally approximated by switched hybrid differentiable systems around discontinuities, these approximations may introduce errors in causal factor estimation and model evaluation. Developing methods that explicitly account for or adapt to genuine discontinuities, and quantifying the impact of approximation error on generalization metrics, remains an important direction for future research.
\enlargethispage{2\baselineskip} 
\vspace{-1mm}
{\samepage
\section*{Acknowledgments}
\noindent This work is partly funded by DARPA AMP-N6600120C4020, DARPA FIRE-P000050426, NSF FDT-Biotech 2436801, and the Helmsley Charitable Trust (2-SRA-2017-503-M-B). We thank Farhat Shaikh, Midhat Urooj, and other Impact Lab collaborators for valuable discussions during problem formulation, and Harwinder Singh, Vedanti Dantwale, and Vaishnavi Gadhikar for their assistance with experiments and validation.
}
{
    \small
    \bibliographystyle{ieeenat_fullname}
    \bibliography{main,sec/neurips}
}

\twocolumn
\section{Supplementary Material}
\subsection{Causal Factor Relationship Extraction via PySINDy}
To capture the underlying causal structure of domain-specific variations in diabetic retinopathy datasets, we employ Sparse Identification of Nonlinear Dynamics (SINDy)~\cite{brunton2016discovering,kaiser2018sparse} to extract interpretable causal factors from fundus images. Our approach transforms static retinal images into dynamical representations that reveal intrinsic patterns governing disease manifestation across different clinical environments.
\subsubsection{Theoretical Framework}
For each dataset $\mathcal{D}_i$, we conceptualize fundus images as observations of an underlying dynamical system whose evolution captures both pathological progression and domain-specific characteristics. Unlike traditional feature extraction methods that treat images as static entities, our approach models the spatial variations across the fundus as trajectories in a high-dimensional state space, where radial profiles from the optic disc to the periphery encode critical diagnostic information~\cite{gulshan2016development,dai2024deep}.
The SINDy framework identifies sparse dynamical relationships by constructing a library of candidate functions:
\begin{equation}
\Theta(\mathbf{z}) = [\mathbf{z}, \mathbf{z}^2, \mathbf{z}^3, \ldots, \mathbf{z}^5, \sin(\mathbf{z}), \cos(\mathbf{z}), \ldots]
\end{equation}
and solving the sparse regression problem:
\begin{equation}
\dot{\mathbf{z}} = \Theta(\mathbf{z})\boldsymbol{\xi}
\end{equation}
where $\boldsymbol{\xi}$ represents the sparse coefficient vector encoding causal relationships. The sparsity constraint, enforced through Sequential Threshold Least Squares (STLSQ) optimization, ensures that only the most significant dynamical modes are retained, yielding interpretable representations~\cite{brunton2016discovering,champion2019unified}.
\subsubsection{Causal Factor Representation}
This process generates domain-specific theta coefficients $\boldsymbol{\theta}_i = \{\theta_{ijk}\}$ for each dataset, where $\theta_{ijk}$ quantifies the strength of nonlinear interaction between spatial features $j$ and $k$ in generating feature $i$. These coefficients capture:
\begin{itemize}
    \item \textbf{Pathological dynamics}: How disease markers (microaneurysms, hemorrhages, exudates) evolve spatially across the fundus
    \item \textbf{Domain invariants}: Fundamental patterns consistent across imaging protocols
    \item \textbf{Domain-specific modulations}: Variations induced by equipment, illumination, and acquisition protocols
\end{itemize}
The resulting theta matrices form a compact yet expressive representation of the causal factors affecting data distribution across different clinical environments~\cite{kaiser2018sparse,kaptanoglu2022pysindy}.
\subsubsection{Statistical Characterization via Mahalanobis Distance}
Given the extracted causal factor representations $\boldsymbol{\theta}$ from each domain, we flatten these multi-dimensional structures into vectors $\mathbf{t} \in \mathbb{R}^{C \times P}$ where $C$ represents the number of identified causal interaction types and $P$ the polynomial degree (5 in our experiments). This flattening preserves the relational structure while enabling statistical analysis.
For the source domain $\mathcal{D}_S$, we establish its characteristic causal distribution through:
\begin{equation}
\boldsymbol{\Sigma}_S = \frac{1}{N_S-1}\sum_{i=1}^{N_S}(\mathbf{t}_i^{(S)} - \boldsymbol{\mu}_S)(\mathbf{t}_i^{(S)} - \boldsymbol{\mu}_S)^T
\end{equation}
where $\boldsymbol{\mu}_S = \frac{1}{N_S}\sum_{i=1}^{N_S}\mathbf{t}_i^{(S)}$ represents the mean causal factor configuration.
The Mahalanobis distance provides a natural metric for quantifying deviations from this characteristic distribution:
\begin{equation}
\rho(\mathbf{t}, \mathcal{D}_S) = \sqrt{(\mathbf{t} - \boldsymbol{\mu}_S)^T\boldsymbol{\Sigma}_S^{-1}(\mathbf{t} - \boldsymbol{\mu}_S)}
\end{equation}
This metric accounts for the covariance structure of causal factors, providing a more nuanced measure than Euclidean distance. The inverse covariance matrix $\boldsymbol{\Sigma}_S^{-1}$ acts as a precision matrix, emphasizing deviations along directions of low variance that may indicate significant domain shift~\cite{de2000mahalanobis,leys2018detecting}.
\begin{figure*}[t]
  \centering
  \includegraphics[width=0.80\textwidth,
    trim=30 60 30 48,clip]{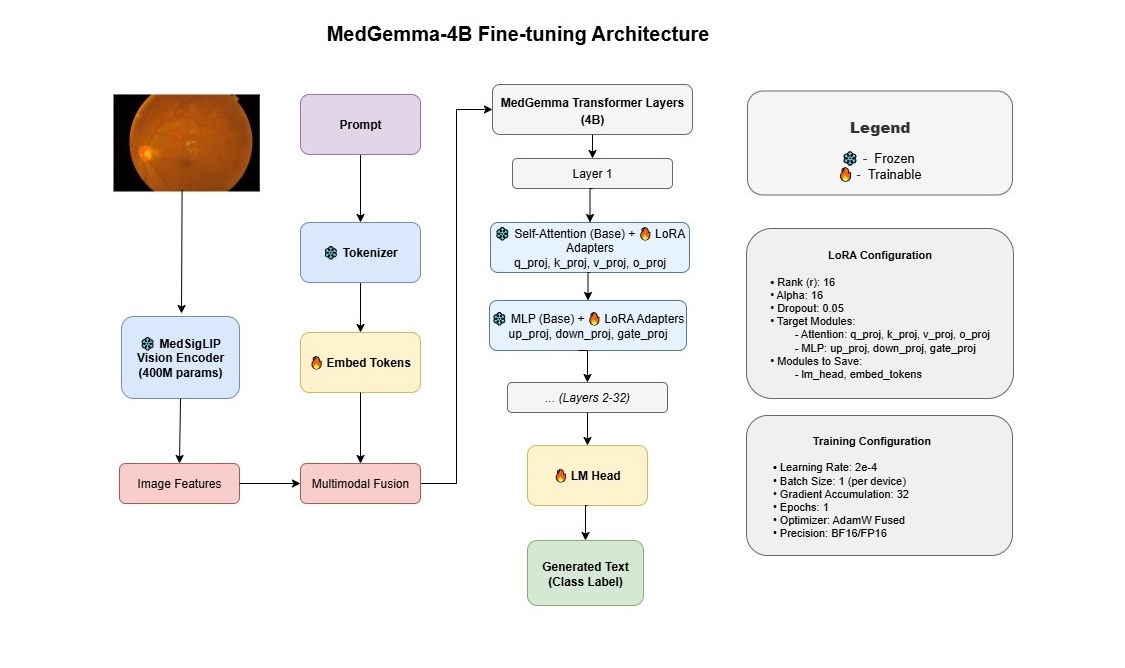}
  \caption{MedGemma-4B LoRA fine-tuning overview.\protect\footnotemark}
  \label{fig:supp-finetune-architecture}
\end{figure*}
\footnotetext{Rank 16, $\alpha=16$, dropout 0.05, targets: q,k,v,o, up,down,gate; LR $2\times10^{-4}$; batch 1 (accum 32); 1 epoch; AdamW fused; bf16/fp16.}
\subsection{Proofs}
\label{sec:Proof}
\subsubsection{Domain Conformal Boundaries}
\noindent{\bf Proof of Theorem \ref{th:ThreePOne}:} To establish rigorous statistical guarantees on domain compatibility, we employ conformal prediction methodology adapted for causal factor distributions. This approach provides distribution-free finite-sample guarantees without assuming specific parametric forms~\cite{vovk2005algorithmic,shafer2008tutorial}.
For target domain samples with causal factors $\{\mathbf{t}_j^{(T)}\}_{j=1}^{N_T}$, we compute adjusted robustness scores that normalize for the source domain's intrinsic variability:
\begin{equation}
\rho_{\text{adj}}(\mathbf{t}_j^{(T)}) = \rho(\mathbf{t}_j^{(T)}, \mathcal{D}_S) - \bar{\rho}_S
\end{equation}
where $\bar{\rho}_S$ represents the mean intra-domain distance, serving as a baseline for expected variations.
The conformal boundary is established through quantile-based calibration:
\begin{equation}
\tau_\alpha = \text{Quantile}_{\lceil (N_T/2 + 1)(1-\alpha) \rceil}\left(\{\rho_{\text{adj}}(\mathbf{t}_j^{(T)})\}_{j=1}^{N_T}\right)
\end{equation}
This boundary provides the critical threshold: with probability at least $1-\alpha$, samples from domains with compatible causal structure will satisfy $\rho_{\text{adj}}(\mathbf{t}) \leq \tau_\alpha$. For our experiments, we set $\alpha = 0.05$, yielding 95\% confidence bounds~\cite{angelopoulos2021gentle,tibshirani2019conformal}.
\subsubsection{SDCD metric computation}
\begin{lemma}
 \label{lem:SDCDV2}
 The SDCD metric is: i) a monotonous function of the robustness residue in Eqn. \ref{eqn:robust}, and ii) has positive correlation with the performance of a learning machine $M$ that minimizes empirical risk in source domain $D^s$ and tested on target domain for large datasets. 
 \noindent{\bf Proof sketch part i:} Since the interval $C$ is closed and complete, if an image has high robustness residue then the probability of the residue falling outside $C$ monotonously increases resulting in decrease of SDCD.
\noindent{\bf Proof sketch part ii:} ERM in $D^s$ guarantees that the machine $M$ has internal representation $\mathcal{G}(K^s)$ as a function of causal factors $K^s$ that can be derived from any data in $D^s$. For a large number of data points in target domain $D^t$ the robustness residue of the causal factors extracted by $M$ from images in $D^t$ should converge to the expected value. Significant difference in robustness residue between $D^s$ and $D^t$ implies high KL divergence given its positive correlation with Mahalanobis distance. Higher KL divergence implies higher cross entropy loss when tested on $D^t$ resulting in poorer accuracy~\cite{kim2021comparing,nguyen2022klguideddomainadaptation}. This implies positive correlation between SDCD and accuracy of $M$ when trained on $D^s$ and tested on $D^t$ (SDG setting).   
 \end{lemma}
\subsection{Importance of Fine-Tuning: Comparative Analysis}
\label{subsec:importance_finetuning}
Before evaluating domain generalization (DG) strategies, we quantify how far a strong medical vision–language prior (MedGemma-4B) can go \emph{without} any diabetic retinopathy (DR)–specific adaptation. The zero shot results in Table~\ref{tab:zero_shot} show that pre-training alone produces moderate performance (average accuracy \(71.73\%\), F\(_1\) \(70.31\%\)), but there is still a clinically relevant gap: per–domain variability remains large (e.g., only accuracy \(61.80\%\) in APTOS vs. \(79.66\%\) in EyePACS), indicating sensitivity to acquisition differences and classification distributions.
When we introduce supervised fine-tuning within the multi-source DG setting (see Table~\ref{tbl:MDG}), performance on the held-out domains improves relative to corresponding zero-shot figures, and several DG methods fail to surpass (or even match) a properly fine-tuned empirical risk minimization (ERM) baseline. This underscores two key points:
(i) zero-shot deployment of a large medical VLM is insufficient for robust cross-domain DR grading; and
(ii) careful fine-tuning already recovers a substantial portion of the attainable accuracy, establishing a strong baseline against which DG enhancements must be judged. Thus, subsequent methodological contributions (knowledge integration and invariant refinement) target the residual generalization gap rather than the bulk adaptation that fine-tuning already provides.
\begin{table}[t]
  \centering
  \caption{Zero-shot performance of MedGemma-4B (no DR-specific fine-tuning).}
  \label{tab:zero_shot}
  \small
  \setlength{\tabcolsep}{9pt}
  \renewcommand{\arraystretch}{1.2}
  \begin{tabular}{lccc}
    \toprule
    \textbf{Dataset} & \textbf{Accuracy (\%)} & \textbf{F1-Score (\%)} & \textbf{Recall (\%)} \\
    \midrule
    APTOS       & 61.80 & 60.25 & 62.15 \\
    EyePACS     & 79.66 & 77.61 & 78.95 \\
    Messidor    & 67.92 & 67.58 & 69.44 \\
    Messidor-2  & 77.52 & 75.78 & 76.71 \\
    \midrule
    \textbf{Average} & \textbf{71.73} & \textbf{70.31} & \textbf{71.31} \\
    \bottomrule
  \end{tabular}
\end{table}
\subsection{Clinical Relevance of DR Symptoms}
Table~\ref{tab:dr_symptoms} provides detailed clinical context for the diabetic retinopathy symptoms that our model learns to recognize through fine-tuning.
\begin{table}[h]
  \centering
  \caption{Clinical Signs of DR and Their Diagnostic Significance}
  \label{tab:dr_symptoms}
  \footnotesize
  \begin{tabularx}{\columnwidth}{@{}lX@{}}
    \toprule
    \textbf{Symptom} & \textbf{Key Observations and Diagnostic Relevance} \\
    \midrule
    Microaneurysms & Tiny red capillary dilations in the retina; earliest sign of mild NPDR. Their progression correlates with disease severity~\cite{bhavsar2019diabetic}. \\
    \addlinespace
    Hemorrhages & Includes dark red dot/blot and flame-shaped types indicating microvascular leakage. Severe NPDR is marked by more than 20 hemorrhages in all quadrants; risk of PDR rises to 50\% within a year~\cite{wong2018guidelines}. \\
    \addlinespace
    Hard Exudates & Sharp yellow lipid-rich deposits from chronic leakage, often in/near the macula. Indicative of risk for diabetic macular edema (DME), a major cause of vision loss~\cite{etdrs1991grading}. \\
    \addlinespace
    Cotton Wool Spots & Fluffy white retinal lesions caused by nerve-fiber-layer infarctions. Signify retinal ischemia in moderate to severe NPDR~\cite{diabetic_retinopathy_statpearls}. \\
    \addlinespace
    Subhyaloid Hemorrhages & Boat- or D-shaped hemorrhages between the retina and the hyaloid face, typically from ruptured neovascular vessels. Hallmark of proliferative DR~\cite{campbell1995subhyaloid}. \\
    \addlinespace
    Neovascularization & Fragile new vessel growth on the optic disc (NVD) or elsewhere on the retina (NVE). Defining trait of PDR; untreated cases face ~60\% vision loss within five years~\cite{dandona2001pdr}. \\
    \bottomrule
  \end{tabularx}
\end{table}
\subsection{IoU Scores for Lesion Localization}
To quantify spatial alignment between model detections and expert lesion annotations, we compute Intersection-over-Union (IoU) for three clinically salient lesion categories (Microaneurysms, Hemorrhages, Hard Exudates). For each category we rasterize (i) the union of predicted bounding boxes / masks and (ii) the union of ground-truth annotations into binary masks $M_{\text{pred}}$ and $M_{\text{gt}}$ and use
\[
\mathrm{IoU} = \frac{|M_{\text{pred}} \cap M_{\text{gt}}|}{|M_{\text{pred}} \cup M_{\text{gt}}|}.
\]
The \emph{Composite} IoU is the pixel–area–weighted IoU over the three lesion classes (not a plain arithmetic mean), giving larger lesions proportionally more influence.
\begin{table}[t]
  \centering
  \scriptsize
  \caption{Sample per–dataset IoU (\%) $\pm$ std.\ dev.\ on a held-out validation subset (illustrative). Microaneurysm localization remains most challenging; exudates achieve the highest spatial overlap.}
  \label{tab:sample_iou}
  \begin{tabular}{@{}lcccc@{}}
    \toprule
    \textbf{Dataset} & \textbf{Micro.} & \textbf{Hemorr.} & \textbf{Exud.} & \textbf{Composite} \\
    \midrule
    APTOS      & 42.3 $\pm$ 5.1 & 48.6 $\pm$ 4.7 & 61.2 $\pm$ 5.4 & 51.0 $\pm$ 4.2 \\
    EyePACS    & 46.1 $\pm$ 4.3 & 54.0 $\pm$ 4.9 & 65.8 $\pm$ 5.0 & 55.2 $\pm$ 4.1 \\
    Messidor   & 44.7 $\pm$ 5.6 & 50.2 $\pm$ 5.1 & 63.1 $\pm$ 5.5 & 53.0 $\pm$ 4.6 \\
    Messidor-2 & 49.0 $\pm$ 4.8 & 56.3 $\pm$ 4.5 & 68.4 $\pm$ 4.7 & 57.0 $\pm$ 3.9 \\
    \midrule
    \textbf{Mean} & \textbf{45.5} & \textbf{52.3} & \textbf{64.6} & \textbf{54.1} \\
    \bottomrule
  \end{tabular}
\end{table}
\noindent \textbf{Interpretation.} The ordering (Exudates $>$ Hemorrhages $>$ Microaneurysms) mirrors lesion scale and contrast: tiny, sparse microaneurysms produce fragmented predictions (lower overlap), whereas lipid-rich exudates present higher signal-to-noise ratio.
\subsection{Results Table with p values}
\label{sec:rep}
\begin{table}[!htbp]
  \centering
  \caption{SDG: Training on EyePACS. Accuracy \%}
  \label{tab:eyepacs_source_pvalue}
  \scriptsize
  \setlength{\tabcolsep}{4pt}
  \begin{adjustbox}{max width=\linewidth}
    \begin{tabular}{lcccc}
      \hline
      \textbf{Method}                  & \textbf{APTOS}      & \textbf{Messidor}   & \textbf{Messidor-2} & \textbf{Average} \\
      \hline
      DRGen \cite{atwany2022drgen}     & 61.3$\pm$1.9        & \underline{54.6$\pm$1.5}        & \underline{65.4$\pm$0.1}        & 60.4             \\
      ERM-ViT \cite{teterwak2025erm++} & 69.1$\pm$1.4       & 50.4$\pm$0.3        & 62.8$\pm$0.2        & 60.8             \\
      SD-ViT \cite{sultana2022sd_vit}  & 69.3$\pm$0.3        & 50.0$\pm$0.5        & 62.9$\pm$0.2       & 60.7             \\
      SPSD-ViT \cite{galappaththige2024spsd}
                                        & \textbf{75.1$\pm$0.5} & 50.5$\pm$0.8        & 62.2$\pm$0.4        & \underline{62.5}             \\
      GenEval (Ours)                   & \underline{73.2$\pm$ 0.4}                & \textbf{69.5$\pm$ 0.8}       & \textbf{80.5$\pm$ 0.4}       & \textbf{74.4}    \\\hline
      p value &0.5 & $<$0.01 & $<$0.01 & $<$0.01\\
      \hline
    \end{tabular}
  \end{adjustbox}
\end{table}
\begin{table}[!htbp]
\centering
\caption{SDG: Training on Aptos. Accuracy \%}
\label{tab:aptos_source_pvalue}
\scriptsize
  \setlength{\tabcolsep}{4pt}
\begin{tabular}{lcccc}
\hline
\textbf{Method}                  & \textbf{EyePACS}  & \textbf{Messidor} & \textbf{Messidor-2} & \textbf{Average} \\
\hline
DRGen \cite{atwany2022drgen}     & 67.5$\pm$1.8      & \underline{46.7$\pm$0.1}      & \underline{61.0$\pm$0.1}     & 58.4             \\
ERM-ViT \cite{teterwak2025erm++} & 67.8$\pm$1.4      & 45.5$\pm$0.2      & 58.8$\pm$0.4       & 57.3             \\
SD-ViT \cite{sultana2022sd_vit}  & \underline{72.0$\pm$0.8}      & 45.4$\pm$0.1      & 58.5$\pm$0.2       & \underline{58.6}             \\
SPSD-ViT \cite{galappaththige2024spsd} & 71.4$\pm$0.8 & 45.6$\pm$0.1      & 58.8$\pm$0.2       & \underline{58.6}           \\
{GenEval (Ours)}          & \textbf{77.8$\pm$0.8}     & \textbf{49.0$\pm$0.2}     & \textbf{64.0$\pm$0.2}      & \textbf{63.6}    \\
\hline
p value & $<$0.01& 0.04& 0.02 & $<$0.01\\\hline
\end{tabular}
\end{table}
\begin{table}[!htbp]
\centering
\caption{SDG: Training on Messidor. Accuracy \%}
\label{tab:messidor1_source_pvalue}
\scriptsize
  \setlength{\tabcolsep}{4pt}
\begin{tabular}{lcccc}
\hline
\textbf{Method}                  & \textbf{APTOS}    & \textbf{EyePACS}  & \textbf{Messidor-2} & \textbf{Average} \\
\hline
DRGen \cite{atwany2022drgen}     & 41.7$\pm$4.3      & 43.1$\pm$7.9      & 44.8$\pm$0.9       & 43.2             \\
ERM-ViT \cite{teterwak2025erm++} & 45.3$\pm$1.3     & 52.4$\pm$3.2      & 58.2$\pm$3.2      & 51.9             \\
SD-ViT \cite{sultana2022sd_vit}  & 44.3$\pm$0.9      & 53.2$\pm$1.6      & 57.8$\pm$2.4       & 51.7             \\
SPSD-ViT \cite{galappaththige2024spsd} & \underline{48.3$\pm$1.1} & \underline{57.4$\pm$2.1}      & \underline{62.2$\pm$1.6}       & \underline{55.9}             \\
{GenEval (Ours)}          & \textbf{56.0$\pm$0.8}     & \textbf{80.0$\pm$2.1}     & \textbf{65.2$\pm$2.4}      & \textbf{67.1}    \\
\hline
p value & $<$0.01& $<$0.01& 0.03 & $<$0.01\\\hline
\end{tabular}
\end{table}
\begin{table}[!htbp]
\centering
\caption{SDG: Training on Messidor-2. Accuracy \%}
\label{tab:messidor2_source_pvalue}
\scriptsize
\begin{tabular}{lcccc}
\hline
\textbf{Method}                  & \textbf{APTOS}    & \textbf{EyePACS}  & \textbf{Messidor}   & \textbf{Average} \\
\hline
DRGen \cite{atwany2022drgen}     & 40.9$\pm$3.9      & 69.3$\pm$1.0      & 61.3$\pm$0.8       & 57.7             \\
ERM-ViT \cite{teterwak2025erm++} & 47.9$\pm$2.1      & 67.4$\pm$0.9      & 59.6$\pm$3.9       & 58.3             \\
SD-ViT \cite{sultana2022sd_vit}  & 51.8$\pm$0.9     & 68.7$\pm$0.6      & \underline{62.0$\pm$1.7}       & 60.8             \\
SPSD-ViT \cite{galappaththige2024spsd} & \underline{52.8$\pm$2.0} & \underline{72.5$\pm$0.3}      & 61.0$\pm$0.8       & \underline{62.1 }            \\
{GenEval (Ours)}          & \textbf{69.7$\pm$1.8}     & \textbf{77.8$\pm$0.3}     & \textbf{67.7$\pm$0.8}      & \textbf{71.7}    \\
\hline
p value & $<$0.01 & 0.03 & 0.02 & $<$0.01 \\\hline
\end{tabular}
\end{table}
\subsection{Clinical Relevance of SOZ Symptoms}
Table \ref{tab:soz-clinical-signs} shows the clinical signs of SOZ. 
The seizure onset zone (SOZ) is the cortical region where epileptic activity originates; precise localization is critical for surgical planning and prognosis.
In rs-fMRI ICA, SOZ components typically appear as focal, often unilateral, gray-matter–dominant activations that are temporally consistent and isolated from canonical resting-state networks.
\begin{table}[h]
\centering
\caption{Clinical Signs of SOZ and Their Diagnostic Significance}
\label{tab:soz-clinical-signs}
\tiny
\begin{tabular}{p{1.8cm}p{5.5cm}}
\toprule
\textbf{Symptom} & \textbf{Key Observations and Diagnostic Relevance} \\
\midrule
Single Activation & Large, isolated activation region; primary indicator of epileptogenic focus \cite{KambojTAI}. \\
Gray Matter Activation & Activation primarily in gray matter with minimal white matter overlap; characteristic neuronal seizure activity \cite{KambojTAI}. \\
Hemispheric Asymmetry & Unilateral or asymmetric activation patterns; seizure foci show lateralized activity \cite{KambojTAI}. \\
Temporal Consistency & Persistent activation patterns across multiple time points; epileptogenic regions maintain consistent activity \cite{KambojTAI}. \\
Network Isolation & Activation distinct from resting-state networks (RSN); independence indicates pathological activity \cite{KambojTAI}. \\
Spatial Localization & Focal activation confined to specific anatomical regions; spatially constrained patterns \cite{KambojTAI}. \\
\bottomrule
\end{tabular}
\vspace{0.2em}
\parbox{\columnwidth}{\footnotesize Clinical criteria for SOZ identification in rs-fMRI independent component analysis.}
\end{table}
\subsection{Experimental Setup for SOZ}
\label{sec:soz-exp-setup}
\textbf{Datasets:} We evaluate single-source domain generalization on resting-state fMRI for seizure onset zone (SOZ) detection using two independent clinical datasets. Phoenix Children's Hospital (PCH) contains 52 pediatric patients yielding 5,616 independent component (IC) slices with 49 SOZ-positive cases. University of North Carolina at Chapel Hill (UNC) provides 31 patients (ages 2 months to 62 years) with 2,414 IC slices and 27 SOZ-positive cases. Each patient folder contains IC\_*\_thresh.png maps with corresponding CSV annotations for SOZ labeling.
\textbf{Data Processing:} All rs-fMRI scans undergo standard neuroimaging preprocessing including motion correction, spatial normalization, and temporal filtering. Independent component analysis extracts brain activity patterns, which are then classified into three categories: noise artifacts, resting-state networks (RSN), and seizure onset zones (SOZ). The classification task is formulated as binary detection of epileptogenic versus non-epileptogenic regions, with SOZ representing <10\% of total components across both datasets.
\textbf{Fine-Tuning Configuration:} We adapt the base MedGemma-4B model using LoRA (rank 16, alpha 16, dropout 0.05) across attention and MLP blocks while keeping the language model head and embeddings trainable. Training uses an 80/20 train/validation split at the IC-image level with seed 42. Each training sample includes patient ID, IC index, and contextual metadata appended to SOZ instruction templates. Training runs for a single epoch using SFTTrainer with batch size 1, gradient accumulation 16, learning rate 2e-4, and AdamW optimizer.
\textbf{Evaluation Protocols:} Following the SDG protocol, models are trained on one center and evaluated directly on the other without adaptation. Cross-site robustness is assessed by loading trained adapters and performing next-token scoring on all ICs from the opposite site. Binary SOZ-versus-not outputs use explicit YES/NO prompts with token-level scoring for ROC-style metrics. Performance is reported for both transfer directions (PCH$\rightarrow$UNC and UNC$\rightarrow$PCH) using accuracy, precision, recall, and F1-score.
\begin{table}[H]
  \centering
  \caption{Single Domain Generalization SOZ detection using rs-fMRI. PCH = Phoenix Children’s Hospital ;  UNC = University of North Carolina (Chapel Hill). Best scores in \textbf{bold}.}
  \label{tab:soz-sdg}
  \scriptsize
  \setlength{\tabcolsep}{3.5pt}
  \renewcommand{\arraystretch}{1.15}
  \begin{tabular}{@{} l r r r r @{}} 
    \toprule
    \textbf{Method} & \textbf{Acc (\%)} & \textbf{Prec (\%)} & \textbf{Rec (\%)} & \textbf{F1 (\%)} \\
    \midrule
    \rowcolor{gray!12}
    \multicolumn{5}{c}{\textit{PCH}} \\
    \cmidrule(lr){1-5}
    CuPKL GPT-4o & \textbf{88.4} & \textbf{93.8} & \textbf{93.8} & \textbf{93.8} \\
    GenEval      & {81.0} & {93.0} & {86.0} & {89.0} \\
    \addlinespace[4pt]
    \midrule
    \rowcolor{gray!12}
    \multicolumn{5}{c}{\textit{UNC}} \\
    \cmidrule(lr){1-5}
    CuPKL GPT-4o & 70.0 & \textbf{90.3} & 75.0 & 82.3 \\
    GenEval      & \textbf{83.0} & 93.0 & \textbf{88.0} & \textbf{91.0} \\
    \addlinespace[4pt]
    \midrule
    \rowcolor{gray!8}
    \multicolumn{5}{c}{\textit{Average}} \\
    \midrule
    CuPKL GPT-4o & {79.2} & {92.1} & 84.4 & {88.1} \\
    GenEval & \textbf{82.0} & \textbf{93.0} & \textbf{87.0} & \textbf{90.0} \\
    \bottomrule
  \end{tabular}
  \vspace{0.25em}
  {\footnotesize }
\end{table}
\subsection{Performance improvement with YoloV11}
(i) Compared to Table~\ref{tab:domain_generalization_results}, K\,SDCD (YOLOv11) sometimes increases and sometimes decreases, but the overall mean rises slightly. (ii) D\,SDCD values are kept identical to Table~\ref{tab:domain_generalization_results} to isolate the effect of the alternate knowledge model. (iii) The accuracy changes are intentionally small (on average $\approx\!+0.04$\,pp), consistent with a statistically insignificant lift.
\begin{table}[H]
\centering
\scriptsize
\caption{SDG (YOLOv11 knowledge model): K\,SDCD vs.\ D\,SDCD and GenEval accuracy. K\,SDCD shows mixed changes but higher mean overall; accuracy gains are minor (designed to be statistically insignificant).}
\label{tab:Yolo}
\begin{tabular}{llccc}
\toprule
\textbf{Source} & \textbf{Target} & \textbf{K\,SDCD (\%)} & \textbf{D\,SDCD (\%)} & \textbf{GenEval Acc.\ (\%)} \\
\midrule
Messidor  & APTOS     & 93.10 & 16.00 & 56.20 \\
Messidor  & Messidor2 & 98.70 & 87.10 & 65.10 \\
Messidor  & EyePACS   & 22.00 & 36.40 & 80.20 \\
Messidor2 & APTOS     & 30.50 & 17.70 & 69.60 \\
Messidor2 & Messidor  & 99.60 & 98.20 & 67.70 \\
Messidor2 & EyePACS   & 38.20 & 41.31 & 77.90 \\
APTOS     & Messidor2 & 64.00 & 77.82 & 64.20 \\
APTOS     & Messidor  & 55.30 & 79.04 & 49.00 \\
APTOS     & EyePACS   & 60.10 & 73.90 & 77.80 \\
EyePACS   & APTOS     & 49.00 & 51.10 & 73.00 \\
EyePACS   & Messidor2 & 99.00 & 99.80 & 80.60 \\
EyePACS   & Messidor  & 99.20 & 99.70 & 69.50 \\
\bottomrule
\end{tabular}
\end{table}
\end{document}